\newtheorem{proposition}{Proposition}
\newtheorem{definition}{Definition}
\icmltitlerunning{LowFER: Low-rank Bilinear Pooling for Link Prediction}
\begin{document}

\twocolumn[
\icmltitle{LowFER: Low-rank Bilinear Pooling for Link Prediction}




\begin{icmlauthorlist}
\icmlauthor{Saadullah Amin}{dfki}
\icmlauthor{Stalin Varanasi}{dfki,uds}
\icmlauthor{Katherine Ann Dunfield}{dfki,uds}
\icmlauthor{G\"unter Neumann}{dfki,uds}
\end{icmlauthorlist}

\icmlaffiliation{dfki}{German Research Center for Artificial Intelligence (DFKI), Saarbrücken, Germany}
\icmlaffiliation{uds}{Department of Language Science and Technology, Saarland University, Saarbrücken, Germany}

\icmlcorrespondingauthor{Saadullah Amin}{saadullah.amin@dfki.de}

\icmlkeywords{Knowledge Graph Completion, Link Prediction, Bilinear Models, Statistical Relation Learning, Bilinear Pooling, Low-rank Factorization, Tucker decomposition}

\vskip 0.3in
]



\printAffiliationsAndNotice{}  


\begin{abstract}

Knowledge graphs are incomplete by nature, with only a limited number of observed facts from the world knowledge being represented as structured relations between entities. To partly address this issue, an important task in statistical relational learning is that of \textit{link prediction} or \textit{knowledge graph completion}. Both linear and non-linear models have been proposed to solve the problem. Bilinear models, while expressive, are prone to overfitting and lead to quadratic growth of parameters in number of relations. Simpler models have become more standard, with certain constraints on bilinear map as relation parameters. In this work, we propose a factorized bilinear pooling model, commonly used in multi-modal learning, for better fusion of entities and relations, leading to an efficient and constraint-free model. We prove that our model is fully expressive, providing bounds on the embedding dimensionality and factorization rank. Our model naturally generalizes Tucker decomposition based TuckER model, which has been shown to generalize other models, as efficient low-rank approximation without substantially compromising the performance. Due to low-rank approximation, the model complexity can be controlled by the factorization rank, avoiding the possible cubic growth of TuckER. Empirically, we evaluate on real-world datasets, reaching on par or state-of-the-art performance. At extreme low-ranks, model preserves the performance while staying parameter efficient.

\end{abstract}


\section{Introduction}

Knowledge graphs (KGs) are large collections of structured knowledge, organized as subject and object entities and relations, in the form of fact triples $<$\textit{sub}, rel, \textit{obj}$>$. The usefulness of knowledge graphs, however, is affected primarily by their incompleteness. The task of \textit{link prediction} or \textit{knowledge graph completion} (KGC) aims to infer missing facts from existing ones, by essentially \textit{scoring} a relation and entities triple for use in predicting its validity, and thereby avoiding the cost and time of extending knowledge graphs manually. To accomplish this, several models have been proposed, including linear and non-linear models. Bilinear models have additionally been used in multi-modal learning due to their expressive nature, where the fusion of features from different modalities plays a key role towards the performance of a model, with concatenation or element-wise summation being commonly used fusion techniques. The underlying assumption is that the distributions of features across modalities may vary significantly, and the representation capacity of the fused features may be insufficient, therefore limiting the final prediction performance \cite{yu2017multi}. In this work, we apply this assumption to knowledge graphs by considering that the \textit{entities} and \textit{relations} come from different multi-modal distributions and good fusion between them can potentially construct a KG. 

A major drawback of using bilinear modeling methods is the quadratic growth of parameters, which results in high computational and memory costs and risks overfitting. In multi-modal learning, \textit{factorization} techniques have therefore been researched to address these challenges \cite{kim2016hadamard,fukui2016multimodal,yu2017multi,ben2017mutan,li2017factorized, liu2018efficient}, and \textit{constraints} based bilinear maps have become a more prevalent standard in link prediction \cite{yang2014embedding,trouillon2016complex,kazemi2018simple}. Applying constraints can be seen as hard regularization since it allows for incorporating background knowledge \cite{kazemi2018simple}, but restricts the learning potential of the model due to limited parameter sharing \cite{balazevic2019tucker}. We focus on a constraint-free approach, using the low-rank factorization of bilinear models, as it offers flexibility and generalizes well, naturally leading to other models under certain conditions. Our work extends the multi-modal factorized bilinear pooling (MFB) model, introduced by \citet{yu2017multi}, and applies it to the link prediction task. 

Our contributions are outlined as follows:

\begin{itemize}
  \item We propose a simple and parameter efficient linear model by extending multi-modal factorized bilinear (MFB) pooling \cite{yu2017multi} for link prediction.
  \item We prove that our model is \textit{fully expressive} and provide bounds on entity and relation embedding dimensions, along with the factorization rank.
  \item We provide relations to the family of bilinear link prediction models and Tucker decomposition \cite{tucker1966some} based TuckER model \cite{balazevic2019tucker}, generalizing them as special cases. We also show the relation to 1D convolution based HypER model \cite{balavzevic2019hypernetwork}, bridging the gap from bilinear to convolutional link prediction models.
  \item On real-world datasets the model achieves on par or state-of-the-art performance, where at extreme low-ranks, with limited number of parameters, it outperforms most of the prior arts, including deep learning based models.   
\end{itemize}


\section{Related Work}

Given a set of entities $\mathcal{E}$ and relations $\mathcal{R}$ in a knowledge graph $\mathcal{KG}$, the task of link prediction is to assign a score $s$ to a triple $(e_s, r, e_o)$:
\begin{equation*}
    s = f(e_s, r, e_o)
\end{equation*}
where $e_s \in \mathcal{E}$ is the \textit{subject} entity, $e_o \in \mathcal{E}$ is the \textit{object} entity and $r \in \mathcal{R}$ is the \textit{relation} between them. The scoring function $f$ estimates the general binary tensor $\mathbf{T} \in |\mathcal{E}| \times |\mathcal{R}| \times |\mathcal{E}|$, by assigning a score of $1$ to $\mathbf{T}_{ijk}$ if relation $r_j$ exists between entities $e_i$ and $e_k$, $0$ otherwise. The scoring function can be a linear or non-linear model, trained to predict true triples in a $\mathcal{KG}$.

Deep learning based scoring functions such as ConvE \cite{dettmers2018convolutional} and HypER \cite{balavzevic2019hypernetwork} use 2D and 1D convolution on subject entity and relation representations respectively. Both perform well in practice and are efficient, but the former lacks direct interpretation, whereas the latter has shown to be related to tensor factorization. Transitional methods \cite{bordes2013translating,wang2014knowledge,ji2015knowledge,lin2015modeling,nguyen2016stranse,feng2016knowledge} use additive dissimilarity scoring functions, whereby they differ in terms of the constraints applied to the projection matrices. While interpretable, they are theoretically limited as they have shown to be not \textit{fully expressive} \cite{wang2018multi,kazemi2018simple}. There are several other related works \cite{nickel2016holographic,das2017go,yang2017differentiable,shen2018m,schlichtkrull2018modeling,ebisu2018toruse,sun2019rotate}, but we will mainly focus on different types of linear models here, as they are more relevant to our work. 

All discussed linear models can be seen as a decomposition of the tensor $\mathbf{T}$, using different factorization methods. One way to factorize this tensor is to factorize its slices in the relation dimension with DEDICOMP \cite{harshman1978models}. RESCAL \cite{nickel2011three}, a relaxed version of DEDICOMP, decomposes using a scoring function that consists of a bilinear product between subject and object entity vectors with a relation specific matrix. RESCAL, however, tends to overfit due to the quadratic growth of parameters in number of relations. Others use Canonical Polyadic decomposition (CPD or simply CP) \cite{hitchcock1927expression,harshman1994parafac} to factorize the binary tensor. In CP, each value in the tensor is obtained as a sum of multiple Hadamard products of three vectors, representing subject, object and relation. DistMult \cite{yang2014embedding}, equivalent to INDSCAL \cite{carroll1970analysis}, is as such and uses a diagonal relation matrix, unlike RESCAL, to account for overfitting. ComplEx \cite{trouillon2016complex,trouillon2017complex} uses complex valued vectors for entities and relations to explicitly model asymmetric relations. SimplE \cite{kazemi2018simple} extends CP by introducing two vectors (\textit{head} and \textit{tail}) for each entity and two for relations (including the inverse). Tucker decomposition \cite{tucker1966some} based TuckER \cite{balazevic2019tucker} learns a 3D core tensor, which is multiplied with a matrix along each mode to approximate the binary tensor. A key difference between CP based methods and TuckER is that it learns representations not only via embeddings, but also through a shared core tensor.


\section{Model}

Downstream performance for tasks such as visual question answering strongly depends on the multi-modal fusion of features to leverage the heterogeneous data \cite{liu2018efficient}. Bilinear models are expressive as they allow for pairwise interactions between the feature dimensions but also introduce huge number of parameters that lead to high computational and memory costs and the risk of overfitting \cite{fukui2016multimodal}. Substantial research has therefore focused on efficiently computing the bilinear product. In multi-modal compact bilinear (MCB) pooling \cite{gao2016compact,fukui2016multimodal}, authors employ a sampling based approximation that uses the property that the tensor sketch projection \cite{charikar2004finding,pham2013fast} of the outer product of two vectors can be represented as their sketches convolution. Multi-modal low-rank bilinear (MLB) pooling \cite{kim2016hadamard} uses two low-rank projection matrices to transform the features from the original space to a common space, followed by the Hadamard product, which was later generalized by the multi-modal factorized bilinear (MFB) pooling \cite{yu2017multi}. Our work is based on the MFB model but can also be seen as related to \citet{liu2018efficient}. In contrast to KGC bilinear models, these bilinear models allow for parameter sharing and generally, are constraint-free.


\subsection{Multi-modal Factorized Bilinear Pooling (MFB)} \label{section:mfb}

Given two feature vectors $\mathbf{x} \in \mathbb{R}^{m}$, $\mathbf{y} \in \mathbb{R}^{n}$ and a bilinear map $\mathbf{W} \in \mathbb{R}^{m \times n}$, the bilinear transformation is defined as $z = \mathbf{x}^{T}\mathbf{W}\mathbf{y} \in \mathbb{R}$. To obtain a vector in $\mathbb{R}^{o}$, $o$ such maps have to be learned (e.g. in RESCAL these would be relation specific matrices), resulting in large number of parameters. However, $\mathbf{W}$ can be factorized into two low-rank matrices:
\begin{align*}
    z = \mathbf{x}^{T}\mathbf{U}\mathbf{V}^{T}\mathbf{y} = \mathbf{1}^{T}(\mathbf{U}^{T}\mathbf{x} \circ \mathbf{V}^{T}\mathbf{y})
\end{align*}
where $\mathbf{U} \in \mathbb{R}^{m \times k}$, $\mathbf{V} \in \mathbb{R}^{n \times k}$, $k$ is the factorization rank, $\circ$ is the element-wise product of two vectors and $\mathbf{1} \in \mathbb{R}^{k}$ is vector of all ones. Therefore, to obtain a output feature vector $\mathbf{z} \in \mathbb{R}^{o}$, two 3D tensors are required, $\mathbf{W}_{x} = [\mathbf{U}_1, \mathbf{U}_2, ..., \mathbf{U}_o] \xrightarrow{\textit{reshape}} \mathbf{W}_{x}^{'}$ and $\mathbf{W}_{y} = [\mathbf{V}_1, \mathbf{V}_2, ..., \mathbf{V}_o] \xrightarrow{\textit{reshape}} \mathbf{W}_{y}^{'}$, where $\mathbf{W}_{x} \in \mathbb{R}^{m \times k \times o}$, $\mathbf{W}_{y} \in \mathbb{R}^{n \times k \times o}$ are 3D tensors and $\mathbf{W}_{x}^{'} \in \mathbb{R}^{m \times ko}$, $\mathbf{W}_{y}^{'} \in \mathbb{R}^{n \times ko}$ are their reshaped 2D matrices respectively. The final (fused) vector $\mathbf{z}$ is then obtained by summing non-overlapping windows of size $k$ over the Hadamard product of projected vectors using $\mathbf{W}_{x}^{'}$ and $\mathbf{W}_{y}^{'}$:
\begin{align}
    \mathbf{z} &= \text{SumPool}(\mathbf{W}_{x}^{'T}\mathbf{x} \circ \mathbf{W}_{y}^{'T}\mathbf{y}, k) \label{eq:mfb}
\end{align}
At $k=1$, MFB reduces to MLB, which converges slowly, and MCB requires very high-dimensional vectors to perform well \cite{yu2017multi}. Further, MFB significantly lowers the number of parameters with low-rank factorized matrices and leads to better performance.


\subsection{Low-rank Bilinear Pooling for Link Prediction}

Consider that \textit{entities} and \textit{relations} are not intrinsically bound and come from two different modalities, such that \textit{good} fusion between them can potentially result in a knowledge graph of fact triples. Entities and relations can be shown to possess certain properties that allow them to function similarly to others within the same modality. For example, the relation \textit{place-of-birth} shares inherent properties with the relation \textit{residence}. As such, similar entity pairs can yield similar relations, given appropriate shared properties. Like in multi-modal auditory-visual fusion, where the sound of a roar may better predict a resulting image within the distribution of animals that roar, a relation such as \textit{place-of-birth}, can better predict an entity pair within a distribution of (\textit{person}, \textit{place}) entity pairs. In link prediction, we assume that the latent decomposition with MFB can help the model capture different aspects of interactions between an entity and a relation, which can lead to better scoring with the missing entity. We therefore, apply the \textbf{Low}-rank \textbf{F}actorization trick of bilinear maps with $k$-sized non-overlapping summation pooling (section \ref{section:mfb}) to \textbf{E}ntities and \textbf{R}elations (LowFER). 


\begin{figure}[!tb] \label{fig:fig_lowfer_overview}
    \includegraphics[width=0.82\linewidth]{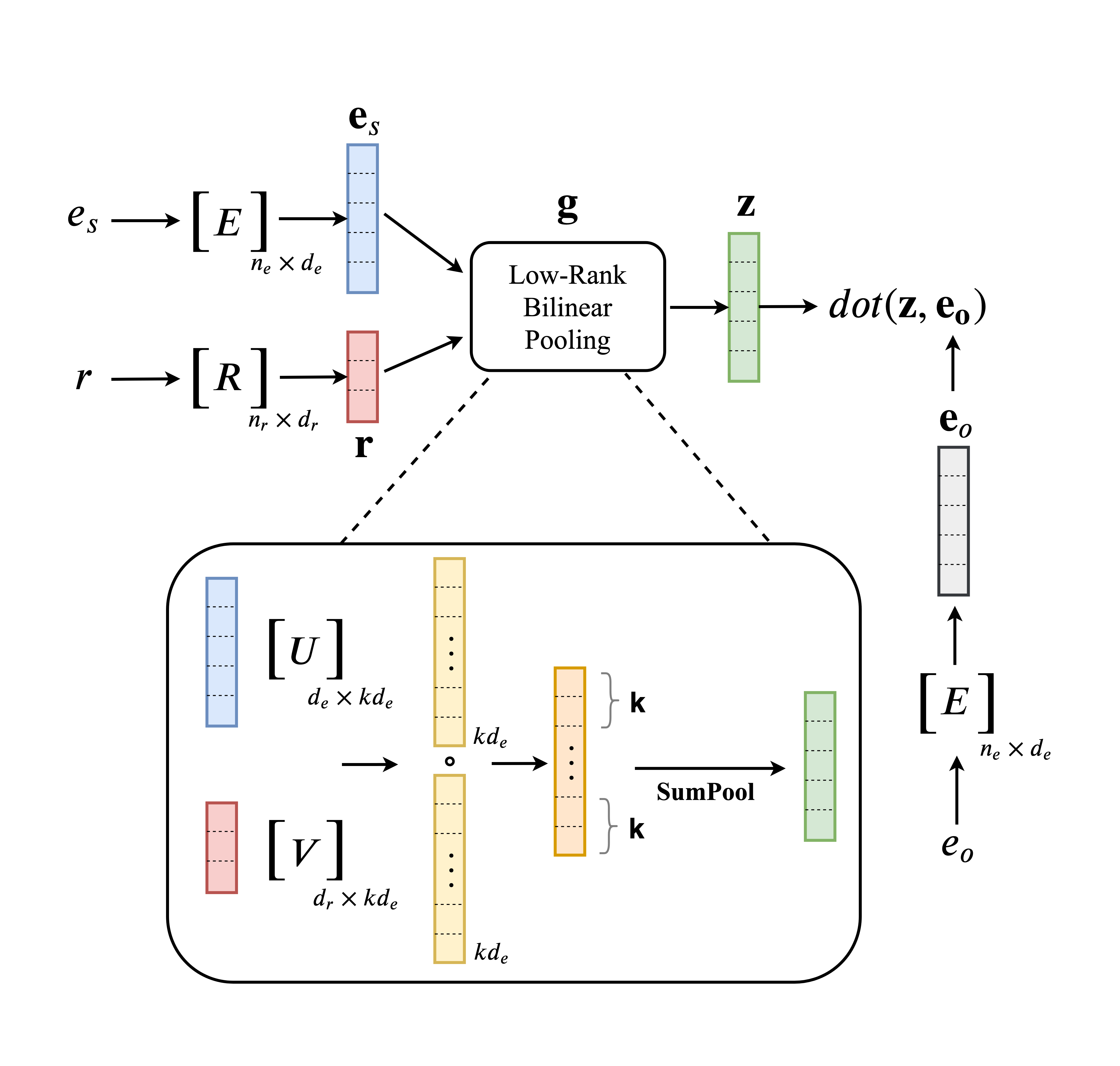}
    \centering
    \vspace{-0.5cm}
    \caption{Overview of the LowFER model. For an input tuple $(e_s, r)$ and target entity $e_o$, we first get entity vectors $\mathbf{e}_s, \mathbf{e}_o \in \mathbb{R}^{d_e}$ from entity embedding matrix $\mathbf{E} \in \mathbb{R}^{n_e \times d_e}$ and relation vector $\mathbf{r} \in \mathbb{R}^{d_r}$ from relation embedding matrix $\mathbf{R} \in \mathbb{R}^{n_r \times d_r}$, where $n_e$ and $n_r$ are number of entities and relations in $\mathcal{KG}$. LowFER projects $\mathbf{e}_s$ and $\mathbf{r}$ into a common space $\mathbb{R}^{kd_e}$ followed by Hadamard product and $k$-summation pooling, where $k$ is the factorization rank. The output vector $\mathbf{z}$ is then matched against target entity $\mathbf{e}_o$ to give final score.}
\end{figure}

More formally, for an entity $e \in \mathcal{E}$, we represent its embedding vector $\mathbf{e}$ of $d_e$ dimension as a look-up from entity embedding matrix $\mathbf{E} \in \mathbb{R}^{n_e \times d_e}$, where $n_e = |\mathcal{E}|$. Similarly, for a relation $r \in \mathcal{R}$, we represent its embedding vector $\mathbf{r}$ of $d_r$ dimension as a look-up from relation embedding $\mathbf{R} \in \mathbb{R}^{n_r \times d_r}$, where $n_r = |\mathcal{R}|$. Then, for a given triple $(e_s, r, e_o)$, we define our scoring function as:
\begin{equation}
    f(e_s, r, e_o) := \mathbf{g}(e_s, r) \cdot \mathbf{e}_{o} = \mathbf{g}(e_s, r)^{T}\mathbf{e}_{o} \label{eq:main_scoring_func_lowfer}
\end{equation}
where $\mathbf{g}(., .) \in \mathbb{R}^{d_e}$ is a vector valued function of the subject entity vector $\mathbf{e}_s$ and the relation vector $\mathbf{r}$, defined from Eq. \ref{eq:mfb} as:
\begin{equation}
    \mathbf{g}(e_s, r) := \text{SumPool}(\mathbf{U}^{T}\mathbf{e}_{s} \circ \mathbf{V}^{T}\mathbf{r}, k) \label{eq:sumpool_lowfer}
\end{equation}
where matrices $\mathbf{U} \in \mathbb{R}^{d_e \times kd_e}$ and $\mathbf{V} \in \mathbb{R}^{d_r \times kd_e}$ represent our model parameters. We can re-write the Eq. \ref{eq:sumpool_lowfer} more compactly as: 
\begin{equation}
    \mathbf{g}(e_s, r) = \mathbf{S}^k\text{diag}(\mathbf{U}^T \mathbf{e}_s)\mathbf{V}^T \mathbf{r} \label{eq:compact_g}
\end{equation}
where $\text{diag}(\mathbf{U}^T \mathbf{e}_s) \in \mathbb{R}^{kd_e \times kd_e}$ and $\mathbf{S}^k \in \mathbb{R}^{d_e \times kde}$ is a constant matrix\footnote{Note that at $k=1$, $\mathbf{S}^1 = \mathbf{I}_{d_e}$} such that: 
\begin{equation*}
    \mathbf{S}^{k}_{i,j} =
    \begin{cases}
      1, & \forall  j \in [(i-1)k+1, ik] \\
      0, & \text{otherwise}
    \end{cases}
\end{equation*}
Using this compact notation in Eq. \ref{eq:main_scoring_func_lowfer}, the final scoring function of LowFER is obtained as:
\begin{equation}
    f(e_s, r, e_o) = (\mathbf{S}^k\text{diag}(\mathbf{U}^T \mathbf{e}_s)\mathbf{V}^T \mathbf{r})^{T}\mathbf{e}_{o} \label{eq:final_scoring_func_lowfer} 
\end{equation}


\subsection{Training LowFER}

To train the LowFER model, we follow the setup of \citet{balazevic2019tucker}. First, we apply sigmoid non-linearity after Eq. \ref{eq:final_scoring_func_lowfer} to get the probability $p(y_{(e_s, r, e_o)}) = \sigma(f(e_s, r, e_o))$ of a triple belonging to a $\mathcal{KG}$. Then, for every triple $(e_s, r, e_o)$ in the dataset, a reciprocal relation is added by generating a synthetic example $(e_o, r^{-1}, e_s)$ \cite{dettmers2018convolutional,lacroix2018canonical} to create the training set $\mathcal{D}$. For faster training, \citet{dettmers2018convolutional} introduced 1-N scoring, where each tuple $(e_s, r)$ and $(e_o, r^{-1})$ is simultaneously scored against all entities $e \in \mathcal{E}$ to predict $1$ if $e=e_o$ or $e_s$ respectively and $0$ elsewhere (see \citet{trouillon2017knowledge} and \citet{sun2019rotate} for other methods to collect negative samples). The model is trained with binary cross-entropy instead of margin based ranking loss \cite{bordes2013translating}, which is prone to overfitting for link prediction \cite{trouillon2017complex, kazemi2018simple}. For a mini-batch $\mathcal{B}$ of size $m$ drawn from $\mathcal{D}$, we minimize:
\begin{multline*}
    \min_{\Theta} \frac{1}{m} \sum_{(e, r) \in \mathcal{B}} \frac{1}{n_e} \sum_{i=1}^{n_e}(  y_i \log(p(y_{(e, r, e_i)})) \\ + (1-y_i) \log(1 - p(y_{(e, r, e_i)})) )
\end{multline*}
where $y_i$ is a target label for a given entity-relation pair $(e, r)$ for entity $e_i$, $y_{(e, r, e_i)}$ is the model prediction and $\Theta$ represents model parameters. Following \citet{yu2017multi}, we also apply the power normalization $\mathbf{x} \leftarrow \text{sign}(\mathbf{x})|\mathbf{x}|^{0.5}$ and $l_2$-normalization $\mathbf{x} \leftarrow \mathbf{x}/||\mathbf{x}||$ before summation pooling to stabilize the training from large output values as a result of Hadamard product in Eq. \ref{eq:sumpool_lowfer}.


\begin{table}[!tb]
    \def\arraystretch{1.3}
    \centering
    \caption{Bounds for \textit{fully expressive} linear models, where $n$ is the number of true facts and the trivial bound is given by $n_e^2n_r$.} \label{table:bounds_summary}
    \vspace{0.3cm}
    \resizebox{7.6cm}{!}{
      \begin{tabular}{lc}
        \toprule
        Model & Full Expressibility Bounds \\
        \midrule
        RESCAL \cite{nickel2011three} & $(d_e, d_r)=(n_e,n_e^2)$ \\
        HolE \cite{nickel2016holographic}  & $d_e=d_r=2n_en_r+1$\\
        ComplEx \cite{trouillon2016complex} & $d_e=d_r=n_en_r$ \\
        SimplE \cite{kazemi2018simple} & $d_e=d_r=\text{min}(n_en_r, n+1)$ \\
        TuckER \cite{balazevic2019tucker} & $(d_e,d_r)=(n_e,n_r)$ \\
        LowFER & $(d_e,d_r)=(n_e,n_r) \ \text{for} \ k=\text{min}(n_e, n_r)$ \\
        \bottomrule
      \end{tabular}
    }
    \vspace{-0.2cm}
\end{table}


\section{Theoretical Analysis} \label{sec:theory}


\subsection{Full Expressibility} \label{sec:fully_expressive}

A key theoretical property of link prediction models is their ability to be fully expressive, which we define formally as:


\begin{definition}
Given a set of entities $\mathcal{E}$, relations $\mathcal{R}$, correct triples $\mathcal{T} \subseteq \mathcal{E} \times \mathcal{R} \times \mathcal{E}$ and incorrect triples $\mathcal{T'} = \mathcal{E} \times \mathcal{R} \times \mathcal{E} \setminus \mathcal{T}$, then a model $\mathcal{M}$ with scoring function $f(e_s, r, e_o)$ is said to be fully expressive iff it can accurately separate $\mathcal{T}$ from $\mathcal{T'}$ for all $e_s, e_o \in \mathcal{E}$ and $r \in \mathcal{R}$.
\end{definition}

A \textit{fully expressive} model can represent relations of any type, including \textit{symmetric}, \textit{asymmetric}, \textit{reflexive}, and \textit{transitive} among others. Models such as RESCAL, HolE, ComplEx, SimplE and TuckER have been shown to be \textit{fully expressive} \cite{trouillon2017complex,wang2018multi,kazemi2018simple,balazevic2019tucker}. On the other hand, DistMult is not \textit{fully expressive} as it enforces symmetric relations only. Further, \citet{wang2018multi} showed that TransE is not \textit{fully expressive}, which was later expanded by \citet{kazemi2018simple}, showing that other translational variants, including, FTransE, STransE, FSTransE, TransR and TransH are likewise not \textit{fully expressive}. By the virtue of universal approximation theorem \cite{cybenko1989approximation,hornik1991approximation}, neural networks can be considered \textit{fully expressive} \cite{kazemi2018simple}. Table \ref{table:bounds_summary} summarizes the bounds of linear models that are \textit{fully expressive}. With Proposition \ref{proposition1} (proof in Appendix \ref{appendix:A.1}), we establish that LowFER is \textit{fully expressive} and provide bounds on entity and relation embedding dimensions and the factorization rank $k$.


\begin{figure}[!tb]
    \includegraphics[width=1.0\linewidth]{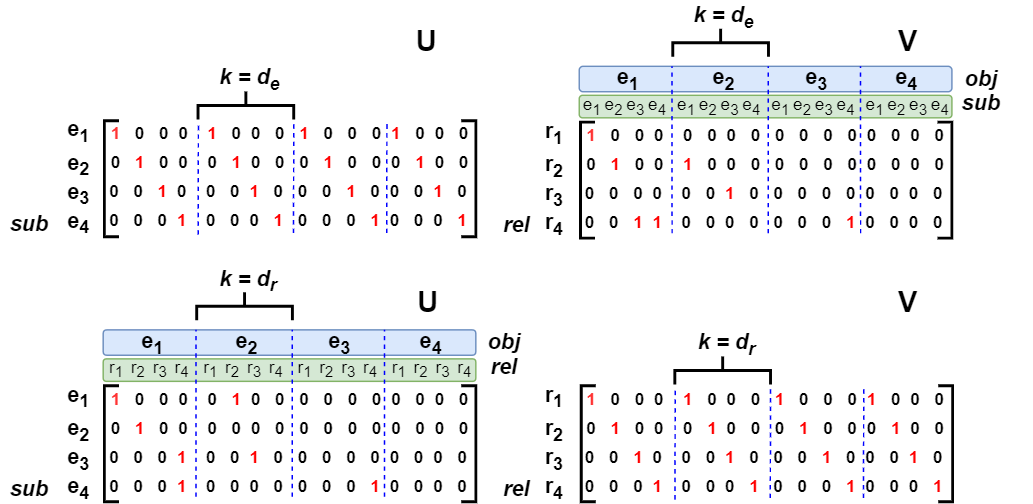}
    \centering
    \caption{LowFER model parameters for a toy dataset under the settings used in Proposition \ref{proposition1}. \textit{Top}: For the case when $k=d_e=n_e$. \textit{Bottom}: For the case when $k=d_r=n_r$.} \label{fig:example_proof}
\end{figure}


\begin{proposition} \label{proposition1}
For a set of entities $\mathcal{E}$ and a set of relations $\mathcal{R}$, given any ground truth $\mathcal{T}$, there exists an assignment of values in the LowFER model with entity embeddings of dimension $d_e=|\mathcal{E}|$, relation embeddings of dimension $d_r=|\mathcal{R}|$ and the factorization rank $k = \min(d_e, d_r)$ that makes it fully expressive.
\end{proposition}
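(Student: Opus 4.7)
The plan is to prove full expressibility by explicitly constructing the LowFER parameters, using one-hot embeddings for entities and relations together with a per-slice SVD-type factorization of the ground-truth tensor. Let $n_e = |\mathcal{E}|$ and $n_r = |\mathcal{R}|$, and encode the ground truth by the binary tensor $\mathbf{T} \in \{0,1\}^{n_e \times n_r \times n_e}$ with $\mathbf{T}_{s,r,o}=1$ iff $(e_s, r, e_o) \in \mathcal{T}$. It suffices to exhibit $\mathbf{E}, \mathbf{R}, \mathbf{U}, \mathbf{V}$ such that $f(e_s, r, e_o) = \mathbf{T}_{s,r,o}$, since a threshold at $\tfrac{1}{2}$ then perfectly separates $\mathcal{T}$ from $\mathcal{T}'$.

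The first step is to set $\mathbf{E} = \mathbf{I}_{n_e}$ and $\mathbf{R} = \mathbf{I}_{n_r}$, so that $\mathbf{e}_s, \mathbf{e}_o, \mathbf{r}$ are standard basis vectors. Next, I would unpack the compact form in Eq.~\ref{eq:final_scoring_func_lowfer}: because $\mathbf{S}^k$ sums non-overlapping blocks of $k$ consecutive columns, partition the columns of $\mathbf{U}$ and $\mathbf{V}$ into $d_e = n_e$ groups of size $k$, calling the $o$-th group $\mathbf{U}^{(o)} \in \mathbb{R}^{n_e \times k}$ and $\mathbf{V}^{(o)} \in \mathbb{R}^{n_r \times k}$. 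A direct index-chasing calculation then gives
\begin{equation*}
f(e_s, r, e_o) \;=\; \sum_{l=1}^{k} \mathbf{U}^{(o)}_{s,l}\, \mathbf{V}^{(o)}_{r,l} \;=\; \bigl(\mathbf{U}^{(o)} \mathbf{V}^{(o)\top}\bigr)_{s,r},
\end{equation*}
so the target condition decouples across $o$: we only need $\mathbf{U}^{(o)} \mathbf{V}^{(o)\top} = \mathbf{T}^{(o)}$, where $\mathbf{T}^{(o)} \in \mathbb{R}^{n_e \times n_r}$ is the $o$-th frontal slice of $\mathbf{T}$.

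The final step invokes the SVD: every matrix $\mathbf{T}^{(o)} \in \mathbb{R}^{n_e \times n_r}$ has rank at most $\min(n_e, n_r) = k$, so it admits a factorization $\mathbf{T}^{(o)} = \mathbf{U}^{(o)} \mathbf{V}^{(o)\top}$ with $\mathbf{U}^{(o)} \in \mathbb{R}^{n_e \times k}$ and $\mathbf{V}^{(o)} \in \mathbb{R}^{n_r \times k}$. Concatenating these blocks horizontally yields $\mathbf{U} \in \mathbb{R}^{n_e \times k n_e}$ and $\mathbf{V} \in \mathbb{R}^{n_r \times k n_e}$ of exactly the shapes demanded by the LowFER parameterization, achieving the desired equality for every $(e_s, r, e_o)$. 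The two cases in Figure~\ref{fig:example_proof} correspond to which of $n_e, n_r$ is the rank-limiting dimension, but both are handled simultaneously by this argument. The only nontrivial point to get right is the bookkeeping for $\mathbf{S}^k$: one must verify that the block-column partition of $\mathbf{U}, \mathbf{V}$ indexed by $o$ is precisely the one selected when post-multiplying $\mathbf{g}(e_s, r)$ by the one-hot $\mathbf{e}_o$; this is the main (minor) obstacle, and a small pictorial example as in Figure~\ref{fig:example_proof} makes it transparent.
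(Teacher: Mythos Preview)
Your proof is correct, but it takes a different route from the paper's. The paper gives an explicit binary construction: with one-hot embeddings, it sets $\mathbf{U}$ to a block of repeated identity matrices (so that each $\mathbf{U}^{(o)} = \mathbf{I}$) and encodes the ground-truth tensor directly into the rows of $\mathbf{V}$, treating the cases $k=d_e$ and $k=d_r$ separately depending on which dimension is smaller. Your argument instead observes that, after the one-hot reduction, the score decouples over the object index into $f(e_s,r,e_o)=(\mathbf{U}^{(o)}\mathbf{V}^{(o)\top})_{s,r}$, and then invokes a rank-$\min(n_e,n_r)$ factorization (SVD) of each frontal slice $\mathbf{T}^{(o)}$ to populate the blocks. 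This is precisely the slice-wise SVD technique the paper reserves for Proposition~\ref{proposition2}; you are, in effect, proving Proposition~\ref{proposition1} as the special case of Proposition~\ref{proposition2} where the TuckER core tensor equals $\mathbf{T}$ itself. What your approach buys is a unified argument that handles both cases of $\min(d_e,d_r)$ at once and makes transparent \emph{why} $k=\min(n_e,n_r)$ is the right bound (it is the universal rank bound on $n_e\times n_r$ matrices). What the paper's construction buys is an explicit $\{0,1\}$-valued assignment, which supports the concrete illustration of Figure~\ref{fig:example_proof} and avoids any appeal to SVD.
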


As a given example, consider a set of entities $\mathcal{E} = \{e_1, e_2, e_3, e_4\}$ and relations $\mathcal{R} = \{r_1, r_2, r_3, r_4\}$ such that $r_1$ is reflexive,  $r_2$ is symmetric, $r_3$ is asymmetric, and $r_4$ is transitive, then for ground truth $\mathcal{T} = \{(e_1, r_1, e_1), (e_1, r_2, e_2), (e_2, r_2, e_1), (e_3, r_3, e_2)$ $, (e_4, r_4, e_3), (e_3, r_4, e_1), (e_4, r_4, e_1)\}$ and following the settings in Proposition \ref{proposition1}, Figure \ref{fig:example_proof} shows the model parameters $\mathbf{U}$ and $\mathbf{V}$ for this toy example. Now, consider the case $k=d_e=n_e$, then $\mathbf{U}$ copies each entity vector in $k$-sized slices and $\mathbf{V}$ buckets target entities per relation such that each source entity is distributed into disjoint sets. Note that reshaping $\mathbf{V}$ as 3D tensor of size $n_r \times n_e \times n_e$ and transposing first two dimensions results in binary tensor $\mathbf{T}$. 


\begin{figure*}[!tbp]
    \includegraphics[width=0.95\linewidth]{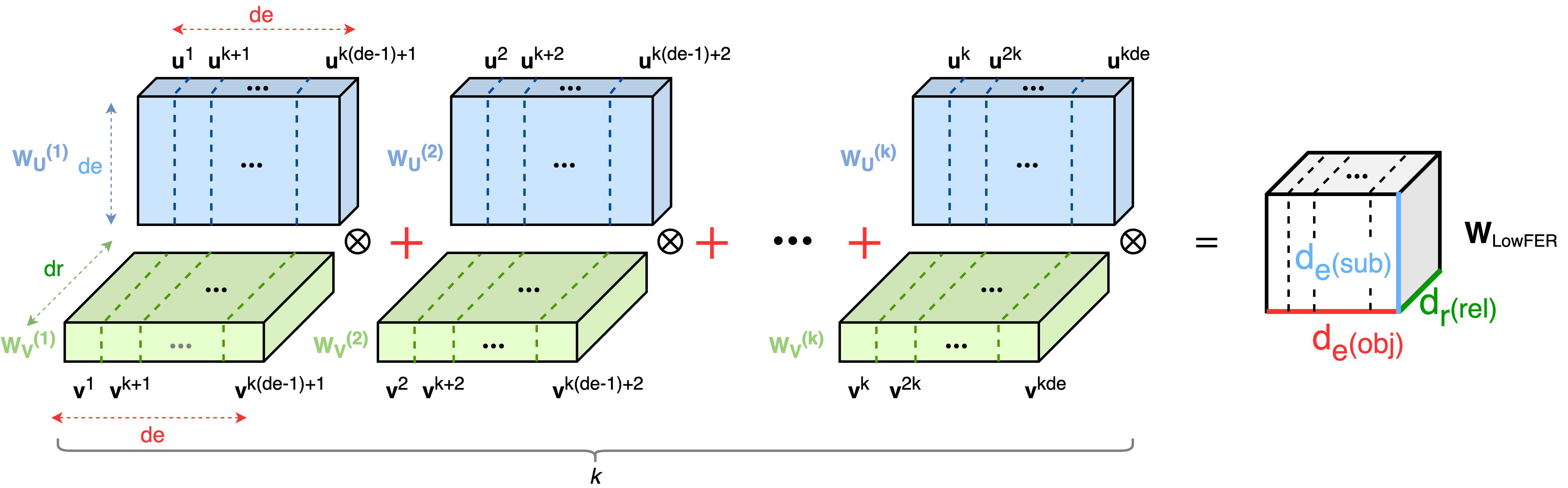}
    \centering
    \caption{Low-rank approximation of the core tensor $\mathcal{W}$ of TuckER \cite{balazevic2019tucker} with LowFER by summing $k$ low-rank 3D tensors, where each tensor is obtained by stacking $d_e$ rank-1 matrices obtained by the outer product of $k$-apart columns of $\mathbf{U}$ and $\mathbf{V}$.} \label{fig:lowfer_core_appx}
\end{figure*}


\subsection{Relation with TuckER} \label{sec:relation_with_tucker}

Initially, it was shown by \citet{kazemi2018simple} that RESCAL, DistMult, ComplEx and SimplE belong to a \textit{family of bilinear models} with different set of constraints. Later, \citet{balazevic2019tucker} established that TuckER generalizes all of these models as special cases. In this section, we will formulate relation between our model and TuckER \cite{balazevic2019tucker}, followed by relations with the \textit{family of bilinear models} in the next section. This provides a unifying view and shows LowFER's ability to generalize. 

TuckER's scoring function is defined as follows \cite{balazevic2019tucker}:
\begin{equation}
    \phi_{t}(e_s, r, e_o) = \mathcal{W} \times_1 \mathbf{e}_s \times_2 \mathbf{r} \times_3 \mathbf{e}_o \label{eq:tucker_scoring_func}
\end{equation}
where $\mathcal{W} \in \mathbb{R}^{d_e \times d_r \times d_e}$ is the core tensor, $\mathbf{e}_s, \mathbf{e}_o \in \mathbb{R}^{d_e}$ and $\mathbf{r} \in \mathbb{R}^{d_r}$ are subject entity, object entity and the relation vectors respectively. $\times_n$ denotes the tensor product along the $n$-th mode. First, note that Eq. \ref{eq:compact_g} can be expanded as:
\begin{equation*}
    \mathbf{S}^k(\mathbf{U}^T \mathbf{e}_s \circ \mathbf{V}^T \mathbf{r}) = \begin{bmatrix}
    \mathbf{e}_{s}^{T}(\sum_{i=1}^{k} \mathbf{u}_{i} \otimes \mathbf{v}_{i})\mathbf{r} \\
    \vdots  \\
    \mathbf{e}_{s}^{T}(\sum_{i=(j-1)k+1}^{jk} \mathbf{u}_{i} \otimes \mathbf{v}_{i})\mathbf{r} \\
    \vdots \\
    \mathbf{e}_{s}^{T}(\sum_{i=k(d_e-1)}^{kd_e} \mathbf{u}_{i} \otimes \mathbf{v}_{i})\mathbf{r}   
\end{bmatrix}
\end{equation*}
where $\mathbf{u}_i \in \mathbb{R}^{d_e}$ and $\mathbf{v}_i \in \mathbb{R}^{d_r}$ are column vectors of $\mathbf{U}$ and $\mathbf{V}$ respectively and $\otimes$ represents the outer product of two vectors. To take the vectors $\mathbf{e}_s$ and $\mathbf{r}$ out, we realize the above matrix operations in a different way. We first create $k$ matrices sliced from $\mathbf{U}$ and $\mathbf{V}$ each, such that each matrix is formed by choosing all adjacent column vectors that are $k$ distance apart in $\mathbf{U}$ (and $\mathbf{V}$), i.e., for the $l$-th slice, we have $\mathbf{W}^{(l)}_{U} = [\mathbf{u}_{l}, \mathbf{u}_{k+l}, ..., \mathbf{u}_{k(d_e-1)+l}] \in \mathbb{R}^{d_e \times d_e}$ and $\mathbf{W}^{(l)}_{V} = [\mathbf{v}_{l}, \mathbf{v}_{k+l}, ..., \mathbf{v}_{k(d_e-1)+l}] \in \mathbb{R}^{d_r \times d_e}$. Taking the column-wise outer product of these sliced matrices forms a 3D tensor in $\mathbb{R}^{d_e \times d_r \times d_e}$. With slight abuse of notation, we also use $\otimes$ to represent this tensor operation. It can be viewed as transforming the matrix obtained by mode-$2$ Khatri-Rao product into a 3D tensor \cite{cichocki2016tensor}. Now consider a 3D tensor $\mathbf{W} \in \mathbb{R}^{d_e \times d_r \times d_e}$ as the sum of these $k$ products:
\begin{equation}
    \mathbf{W} = \sum_{i=1}^{k} \mathbf{W}^{(i)}_{U} \otimes \mathbf{W}^{(i)}_{V} \label{eq:tucker_core_appx}
\end{equation}
Figure \ref{fig:lowfer_core_appx} shows these operations. With this tensor, the scoring function $f$ in Eq. \ref{eq:final_scoring_func_lowfer} can be re-written as TuckER's scoring function as follows:
\begin{equation}
    \hat{\phi}_{t}(e_s, r, e_o) =\mathbf{W} \times_1 \mathbf{e}_s \times_2 \mathbf{r} \times_3 \mathbf{e}_o \label{eq:tucker_approx}
\end{equation}
It should be noted that $\mathbf{W}$ in Eq. \ref{eq:tucker_approx} is obtained as a summation of $k$ low-rank 3D tensors, each of which is obtained by stacking rank-$1$ matrices in contrast to TuckER's core tensor $\mathcal{W}$ in Eq. \ref{eq:tucker_scoring_func}, which can be a full rank 3D tensor. Our model can therefore approximate TuckER and can be viewed as a generalization of TuckER \cite{balazevic2019tucker}. We further show that we can accurately obtain $\mathcal{W}$ with appropriate $\mathbf{W}_U^{(i)}$'s and $\mathbf{W}_V^{(i)}$'s in Eq. \ref{eq:tucker_core_appx} (proof in Appendix \ref{appendix:A.2}).


\begin{proposition} \label{proposition2}
Given a TuckER model with entity embedding dimension $d_e$, relation embedding dimension $d_r$ and core tensor $\mathcal{W}$, there exists a LowFER model with k $<= \min(d_e,d_r)$, entity embedding dimension $d_e$ and relation embedding dimension $d_r$ that accurately represents the former.
\end{proposition}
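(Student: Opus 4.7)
The strategy is direct construction. By Eq.~\ref{eq:tucker_approx}, if we can produce $\mathbf{U}\in\mathbb{R}^{d_e\times kd_e}$ and $\mathbf{V}\in\mathbb{R}^{d_r\times kd_e}$ with $k\le\min(d_e,d_r)$ such that the 3D tensor $\mathbf{W}=\sum_{l=1}^{k}\mathbf{W}_U^{(l)}\otimes\mathbf{W}_V^{(l)}$ of Eq.~\ref{eq:tucker_core_appx} coincides with the given TuckER core $\mathcal{W}$, then LowFER and TuckER assign the same score to every triple, which is exactly what is meant by ``accurately represents''. Hence the proposition reduces to a tensor-realization question: given any $\mathcal{W}\in\mathbb{R}^{d_e\times d_r\times d_e}$, build the columns of $\mathbf{U}$ and $\mathbf{V}$ that realize it.

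The execution proceeds in three steps. First, I will unfold Eq.~\ref{eq:tucker_core_appx} entrywise, using $(\mathbf{W}_U^{(l)})_{:,c}=\mathbf{u}_{k(c-1)+l}$ and $(\mathbf{W}_V^{(l)})_{:,c}=\mathbf{v}_{k(c-1)+l}$, to show that the $c$-th frontal slice of $\mathbf{W}$ has the simple form $\mathbf{W}_{:,:,c}=\sum_{l=1}^{k}\mathbf{u}_{k(c-1)+l}\,\mathbf{v}_{k(c-1)+l}^{T}$; since the index windows $\{k(c-1)+1,\ldots,kc\}$ for distinct $c$ are disjoint, the column choices for different slices are completely decoupled. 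Second, each target slice $\mathcal{W}_{:,:,c}$ is a $d_e\times d_r$ real matrix with rank at most $\min(d_e,d_r)=k$, so a reduced SVD (padded with zero rank-1 terms if the actual rank is strictly smaller) yields vectors $\{\boldsymbol{\alpha}^{(c,l)}\in\mathbb{R}^{d_e},\,\boldsymbol{\beta}^{(c,l)}\in\mathbb{R}^{d_r}\}_{l=1}^{k}$ with $\mathcal{W}_{:,:,c}=\sum_{l=1}^{k}\boldsymbol{\alpha}^{(c,l)}\boldsymbol{\beta}^{(c,l)\,T}$. Third, I will define the LowFER parameters by setting $\mathbf{u}_{k(c-1)+l}:=\boldsymbol{\alpha}^{(c,l)}$ and $\mathbf{v}_{k(c-1)+l}:=\boldsymbol{\beta}^{(c,l)}$ for all $c\in\{1,\ldots,d_e\}$ and $l\in\{1,\ldots,k\}$; by the first step this forces $\mathbf{W}_{:,:,c}=\mathcal{W}_{:,:,c}$ for every $c$, hence $\mathbf{W}=\mathcal{W}$, and Eq.~\ref{eq:tucker_approx} then gives $\hat{\phi}_t\equiv \phi_t$.

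The main obstacle is bookkeeping rather than depth: the interleaved column assignment used to build $\mathbf{W}_U^{(l)}$ and $\mathbf{W}_V^{(l)}$ must be matched precisely to a per-slice rank decomposition of $\mathcal{W}$, and one must verify that a \emph{single} global rank budget $k\le\min(d_e,d_r)$ is simultaneously sufficient across all $d_e$ frontal slices, which it is because the SVD bound is uniform in $c$. Once this indexing correspondence is in place, the remainder of the proof collapses to a single invocation of the SVD rank bound plus the slice-wise identity derived from Eq.~\ref{eq:tucker_core_appx}, and no further analysis is needed.
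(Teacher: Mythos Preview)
Your proposal is correct and follows essentially the same route as the paper: both arguments observe that the $c$-th slice of the LowFER-induced tensor along the object dimension is a sum of $k$ rank-one outer products built from disjoint column blocks of $\mathbf{U}$ and $\mathbf{V}$, then invoke a per-slice SVD of $\mathcal{W}_{:,:,c}$ (which has rank at most $\min(d_e,d_r)$) to populate those columns, padding with zero vectors when a slice's rank is strictly smaller. The only cosmetic difference is that the paper takes $k$ to be the maximum rank over all slices (hence potentially strictly less than $\min(d_e,d_r)$), whereas you fix $k=\min(d_e,d_r)$ outright; both choices satisfy the stated bound.
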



\begin{figure*}[!tbp]
    \includegraphics[width=1.0\linewidth]{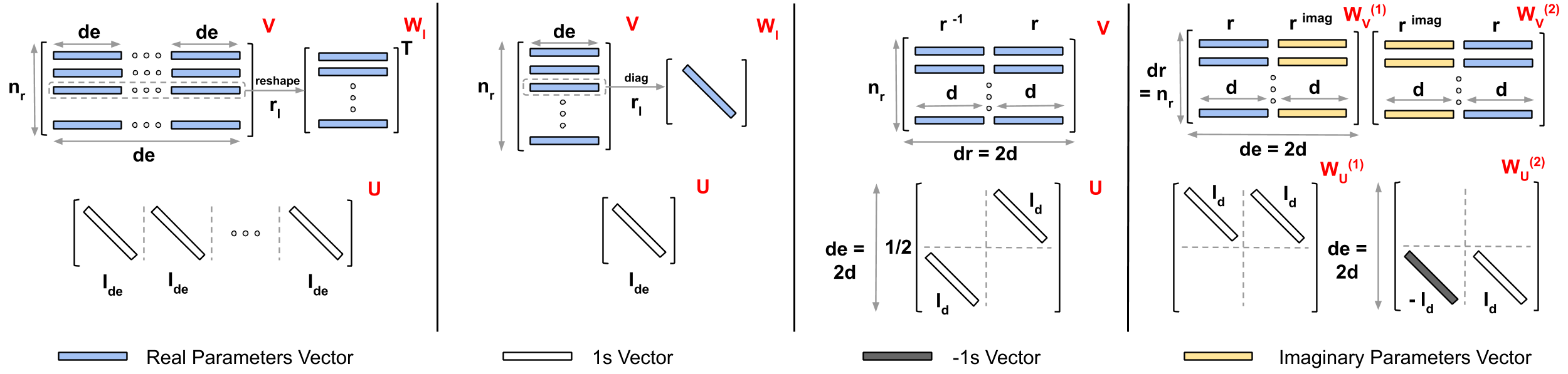}
    \centering
    \caption{Modeling the family of bilinear models with LowFER, (\textit{from left-to-right}): RESCAL \cite{nickel2011three}, DistMult \cite{yang2014embedding}, SimplE \cite{kazemi2018simple} and ComplEx \cite{trouillon2016complex} (see section \ref{sec:relations_with_bilinear_models} for details).}\label{fig:lowfer_rel_with_bilinear_models}
\end{figure*}

LowFER and TuckER parameters grow linearly in the number of entities and relations as $\mathcal{O}(n_ed_e + n_rd_r)$. However, LowFER's shared parameters complexity can be controlled by decoupled low-rank matrices through the factorization rank, making it more flexible, e.g., consider $d=d_e=d_r$, the core tensor $\mathcal{W}$ of TuckER grows as $\mathcal{O}(d^3)$, whereas LowFER grows only as $\mathcal{O}(kd^2)$. As an example, in \citet{lacroix2018canonical} authors used $d_e=d_r=2000$ which would require more than $8$ billion parameters to model with TuckER compared to only $4k$ million for LowFER, with $k$ controlling the growth. More generally, at $k=d_e/2$, LowFER has equal number of parameters as TuckER therefore, we expect similar performance at such rank values. In practice, $k=\{1, 10, 30\}$ performs extremely well (section \ref{sec:link_prediction_results}).


\subsection{Relations with the Family of Bilinear Models} \label{sec:relations_with_bilinear_models}

In this section, we will establish relations between LowFER and other bilinear models. For simplicity, we consider the relation embedding to be a constant matrix $\mathbf{R}=\mathbf{I}_{n_r}$ in all the cases and use $\mathbf{V}$ to model relation parameters. However, the conditions presented here can be extended otherwise, with the remark that they are not unique.

\noindent \textbf{\textsc{RESCAL}} \cite{nickel2011three}: scoring function is defined as:
\begin{equation*}
    \phi_{r}(e_s, r_l, e_o) = \mathbf{e}_s^T \mathbf{W}_l \mathbf{e}_o
\end{equation*}
where $\mathbf{W}_l \in \mathbb{R}^{d_e \times d_e}$ is $l$-th relation matrix. For LowFER to encode RESCAL with Eq. \ref{eq:final_scoring_func_lowfer}, we set $k=d_e$, $d_r=n_r$ and $\mathbf{U} = [\; \mathbf{I}_{d_e} \; | \; \mathbf{I}_{d_e} \; | \; ... \; | \; \mathbf{I}_{d_e} \;] \in \mathbb{R}^{d_e \times d_e^2}$ (block matrix partitioned as $d_e$ identity matrices of size $d_e \times d_e$). This is effectively taking a row $l$ from $\mathbf{V} \in \mathbb{R}^{n_r \times {d_e}^2}$, reshaping it to $d_e \times d_e$ matrix and then taking the transpose to get the equivalent $\mathbf{W}_l$ in RESCAL's scoring function.

\noindent \textbf{\textsc{DistMult}} \cite{yang2014embedding}: scoring function is defined as:
\begin{equation*}
    \phi_{d}(e_s, r_l, e_o) = \mathbf{e}_s^T \text{diag}(\mathbf{w}_l) \mathbf{e}_o
\end{equation*}
where $\mathbf{w}_l \in \mathbb{R}^{d_e}$ is the vector for $l$-th relation.  For LowFER to encode DistMult with Eq. \ref{eq:final_scoring_func_lowfer}, we set $k=1$, $d_r=n_r$ and $\mathbf{U}=\mathbf{I}_{d_e}$. This is effectively taking a row $l$ from $\mathbf{V} \in \mathbb{R}^{n_r \times d_e}$ and creating a diagonal matrix of it to get the equivalent diag$(\mathbf{w}_l)$ in DistMult's scoring function.

\noindent \textbf{\textsc{SimplE}} \cite{kazemi2018simple}: scoring function is defined as:
\begin{equation*}
    \phi_{s}(e_s, r_l, e_o) = \frac{1}{2}( \mathbf{h}_{e_s}^T \text{diag}(\mathbf{r}_l) \mathbf{t}_{e_o} + \mathbf{h}_{e_o}^T \text{diag}(\mathbf{r}^{-1}_l) \mathbf{t}_{e_s})
\end{equation*}
where $\mathbf{h}_{e_s}, \mathbf{h}_{e_o} \in \mathbb{R}^{d}$ are subject, object entities head vectors, $\mathbf{t}_{e_s}, \mathbf{t}_{e_o} \in \mathbb{R}^{d}$ are subject, object entities tail vectors and $\mathbf{r}_l, \mathbf{r}^{-1}_l \in \mathbb{R}^{d}$ are relation and inverse relation vectors. Let $\hat{\mathbf{e}}_s=[\mathbf{t}_{e_s}; \mathbf{h}_{e_s}] \in \mathbb{R}^{2d}$, $\mathbf{e}_o=[\mathbf{h}_{e_o}; \mathbf{t}_{e_o}] \in \mathbb{R}^{2d}$ and $\hat{\mathbf{r}}_l = [\mathbf{r}^{-1}_l; \mathbf{r}_l] \in \mathbb{R}^{2d}$ then SimplE scoring is equivalent to $\frac{1}{2} \hat{\mathbf{e}}_s^T \text{diag}(\hat{\mathbf{r}}_l) \mathbf{e}_o$, where $\hat{\mathbf{e}}_s$ and $\hat{\mathbf{r}}_l$ are obtained by swapping the head, tail vectors in $\mathbf{e}_s=[\mathbf{h}_{e_s}; \mathbf{t}_{e_s}]$ and relation, inverse relation vectors in $\mathbf{r}_l = [\mathbf{r}_l; \mathbf{r}^{-1}_l]$ respectively. For LowFER to encode SimplE, $\mathbf{U}$ becomes a permutation matrix (ignoring the $\frac{1}{2}$ scaling factor), swapping the first $d$-half with the second $d$-half of a given vector in $\mathbb{R}^{2d}$ and $l$-th row in $\mathbf{V}$ is $\hat{\mathbf{r}}_l$, more specifically, with Eq. \ref{eq:final_scoring_func_lowfer}, we set $k=1$, $d_e=2d$, $d_r=n_r$ and $\mathbf{U} \in \mathbb{R}^{2d \times 2d}$ is a block matrix with four partitions such that, $\mathbf{U}_{12}=\mathbf{U}_{21}=\frac{1}{2}\mathbf{I}_d$ and $0$s elsewhere.

\noindent \textbf{\textsc{ComplEx}} \cite{trouillon2016complex} scoring function is defined as:
\begin{multline*}
    \phi_{c}(e_s, r_l, e_o) = \text{Re}(\mathbf{e}_s)^T\text{diag}(\text{Re}(\mathbf{r}_l))\text{Re}(\mathbf{e}_o) \\
    + \text{Im}(\mathbf{e}_s)^T\text{diag}(\text{Re}(\mathbf{r}_l))\text{Im}(\mathbf{e}_o) \\
    + \text{Re}(\mathbf{e}_s)^T\text{diag}(\text{Im}(\mathbf{r}_l))\text{Im}(\mathbf{e}_o) \\
    - \text{Im}(\mathbf{e}_s)^T\text{diag}(\text{Im}(\mathbf{r}_l))\text{Re}(\mathbf{e}_o)
\end{multline*}


\begin{table*}[!tb]
    \centering
    \caption{Link prediction results. Best scores per metric are \textbf{boldfaced} and second best \underline{underlined}.}\label{table:main_results}
    \vspace{0.3cm}
    \resizebox{16cm}{!}{
    \begin{tabular}{clccccccccccccccccccc}
    \toprule 
    & & \multicolumn{4}{c}{WN18RR} & & \multicolumn{4}{c}{FB15k-237} & & \multicolumn{4}{c}{WN18} & & \multicolumn{4}{c}{FB15k} \\ 
    \cmidrule{3-6} \cmidrule{8-11} \cmidrule{13-16} \cmidrule{18-21}
    Linear & Model & MRR & Hits@1 & Hits@3 & Hits@10 & & MRR & Hits@1 & Hits@3 & Hits@10 & & MRR & Hits@1 & Hits@3 & Hits@10 & & MRR & Hits@1 & Hits@3 & Hits@10\\
    \midrule
    \multirow{4}{*}{No} & TransE \cite{bordes2013translating} & $-$ & $-$ & $-$ & $-$ & & $-$ & $-$ & $-$ & $-$ & & $0.454$ & $0.089$ & $0.823$ & $0.934$ & & $0.380$ & $0.231$ & $0.472$ & $0.641$ \\
    & Neural LP \cite{yang2017differentiable} & $-$ & $-$ & $-$ & $-$ & & $0.250$ & $-$ & $-$ & $0.408$ & & $0.940$ & $-$ & $-$ & $0.945$ & & $0.760$ & $-$ & $-$ & $0.837$ \\
    & R-GCN \cite{schlichtkrull2018modeling} & $-$ & $-$ & $-$ & $-$ & & $0.248$ & $0.151$ & $0.264$ & $0.417$ & & $0.819$ & $0.697$ & $0.929$ & $\mathbf{0.964}$ & & $0.696$ & $0.601$ & $0.760$ & $0.842$ \\
    & ConvE \cite{dettmers2018convolutional} & $0.430$ & $0.400$ & $0.440$ & $0.520$ & & $0.325$ & $0.237$ & $0.356$ & $0.501$ & & $0.943$ & $0.935$ & $0.946$ & $0.956$ & & $0.657$ & $0.558$ & $0.723$ & $0.831$ \\
    & TorusE \cite{ebisu2018toruse} & $-$ & $-$ & $-$ & $-$ & & $-$ & $-$ & $-$ & $-$ & & $0.947$ & $0.943$ & $0.950$ & $0.954$ & & $0.733$ & $0.674$ & $0.771$ & $0.832$ \\
    & RotatE \cite{sun2019rotate} & $-$ & $-$ & $-$ & $-$ & & $0.297$ & $0.205$ & $0.328$ & $0.480$ & & $-$ & $-$ & $-$ & $-$ & & $-$ & $-$ & $-$ & $-$ \\
    & HypER \cite{balavzevic2019hypernetwork} & \underline{$0.465$} & \underline{$0.436$} & $0.477$ & $0.522$ & & $0.341$ &$0.252$& $0.376$ & $0.520$ & & \underline{$0.951$} & \underline{$0.947$} & $\mathbf{0.955}$ & \underline{$0.958$} & & $0.790$ & $0.734$ & $0.829$ & $0.885$ \\
    \midrule
    \multirow{6}{*}{Yes} & DistMult \cite{yang2014embedding} & $0.430$ & $0.390$ & $0.440$ & $0.490$ & & $0.241$ & $0.155$ & $0.263$ & $0.419$ & & $0.822$ & $0.728$ & $0.914$ & $0.936$ & & $0.654$ & $0.546$ & $0.733$ & $0.824$ \\
    & HolE \cite{nickel2016holographic} & $-$ & $-$ & $-$ & $-$ & & $-$ & $-$ & $-$ & $-$ & & $0.938$ & $0.930$ & $0.945$ & $0.949$ & & $0.524$ & $0.402$ & $0.613$ & $0.739$ \\
    & ComplEx \cite{trouillon2016complex} & $0.440$ & $0.410$ & $0.460$ & $0.510$ & & $0.247$ & $0.158$ & $0.275$ & $0.428$ & & $0.941$ & $0.936$ & $0.936$ & $0.947$ & & $0.692$ & $0.599$ & $0.759$ & $0.840$ \\
    & ANALOGY \cite{liu2017analogical} & $-$ & $-$ & $-$ & $-$ & & $-$ & $-$ & $-$ & $-$ & & $0.942$ & $0.939$ & $0.944$ & $0.947$ & & $0.725$ & $0.646$ & $0.785$ & $0.854$ \\
    & SimplE \cite{kazemi2018simple} & $-$ & $-$ & $-$ & $-$ & & $-$ & $-$ & $-$ & $-$ & & $0.942$ & $0.939$ & $0.944$ & $0.947$ & & $0.727$ & $0.660$ & $0.773$ & $0.838$ \\
    & TuckER \cite{balazevic2019tucker} & $\mathbf{0.470}$ & $\mathbf{0.443}$ & $\mathbf{0.482}$ & $\mathbf{0.526}$ & & \underline{$0.358$} & $\mathbf{0.266}$ & \underline{$0.394$} & $\mathbf{0.544}$ & & $\mathbf{0.953}$ & $\mathbf{0.949}$ & $\mathbf{0.955}$ & \underline{$0.958$} & & $0.795$ & $0.741$& $0.833$ & $0.892$\\ 
    \cmidrule{2-21}
    & LowFER-1 & $0.454$ & $0.422$ & $0.470$ & $0.515$ & & $0.318$ & $0.233$ & $0.348$ & $0.483$ & & $0.949$ & $0.945$ & $0.951$ & $0.956$ & & $0.720$ & $0.639$ & $0.774$ & $0.859$ \\
    & LowFER-10 & $0.464$ & $0.433$ & $0.477$ & \underline{$0.523$} & & $0.352$ & \underline{$0.261$} & $0.386$ & \underline{$0.533$} & & $0.950$ & $0.946$ & \underline{$0.952$} & \underline{$0.958$} & & \underline{$0.810$} & \underline{$0.760$} & \underline{$0.843$} & \underline{$0.896$} \\
    & LowFER-$k$* & \underline{$0.465$} & $0.434$ & \underline{$0.479$} & $\mathbf{0.526}$ & & $\mathbf{0.359}$ & $\mathbf{0.266}$ & $\mathbf{0.396}$ & $\mathbf{0.544}$ & & $0.950$ & $0.946$ & \underline{$0.952$} & \underline{$0.958$} & & $\mathbf{0.824}$ & $\mathbf{0.782}$ & $\mathbf{0.852}$ & $\mathbf{0.897}$ \\
    \bottomrule
    \end{tabular}
    }
\end{table*}

where Re(.) and Im(.) represents the real and imaginary parts of a complex vector. Consider $\hat{\mathbf{e}}_s=[\text{Re}(\mathbf{e}_s); \text{Im}(\mathbf{e}_s)] \in \mathbb{R}^{2d}$ and $\hat{\mathbf{e}}_o=[\text{Re}(\mathbf{e}_o); \text{Im}(\mathbf{e}_o)] \in \mathbb{R}^{2d}$ then the ComplEx scoring function can be obtained as $\hat{\mathbf{e}}_s^T \mathbf{W}_l \hat{\mathbf{e}}_o$, where $\mathbf{W}_l \in \mathbb{R}^{2d \times 2d}$ represents the $l$-th relation matrix such that its diagonal is $[\text{Re}(\mathbf{r}_l); \text{Re}(\mathbf{r}_l)]$, the $d$ offset diagonal is $\text{Im}(\mathbf{r}_l)$ and $-d$ offset diagonal is $-\text{Im}(\mathbf{r}_l)$. For LowFER to encode ComplEx, similar to SimplE, we will use two permutation matrices to obtain the above four terms. That is, in Eq. \ref{eq:tucker_approx}, we have $k=2$, $d_e=2d$, $d_r=n_r$, $\mathbf{U} \in \mathbb{R}^{2d \times 4d}$ is such that $\mathbf{W}_U^{(1)}$ is a block matrix with $\mathbf{W}_{U_{11}}^{(1)} = \mathbf{W}_{U_{12}}^{(1)} = \mathbf{I}_d$ and $0$ elsewhere. Further, $\mathbf{W}_U^{(2)}$ is also a block matrix with $\mathbf{W}_{U_{21}}^{(2)} = -\mathbf{I}_d$, $\mathbf{W}_{U_{22}}^{(2)} = \mathbf{I}_d$ and $0$ elsewhere. Lastly, $\mathbf{V} \in \mathbb{R}^{n_r \times 4d}$ is such that $\mathbf{W}_V^{(1)}$ row $l$ has $[\text{Re}(\mathbf{r}_l); \text{Im}(\mathbf{r}_l)]$ and $\mathbf{W}_V^{(2)}$ row $l$ has $[\text{Im}(\mathbf{r}_l); \text{Re}(\mathbf{r}_l)]$, i.e., $\mathbf{W}_V^{(2)}=\mathbf{W}_V^{(1)}\mathbf{P}$, where $\mathbf{P} \in \mathbb{R}^{2d \times 2d}$ is the $d$-half swapping permutation matrix. Figure \ref{fig:lowfer_rel_with_bilinear_models} demonstrates LowFER parameters for the \textit{family of bilinear models} under the conditions discussed in this section.


\subsection{Relation to HypER} \label{sec:relation_with_hyper}

HypER \cite{balavzevic2019hypernetwork} is a convolutional model based on \textit{hypernetworks} \cite{ha2016hypernetworks}, where the relation specific 1D filters are generated by the hypernetwork and convolved with the subject entity vector. \citet{balavzevic2019hypernetwork} showed that it can be understood in terms of tensor factorization up to a non-linearity. With a similar argument, we show that LowFER encodes HypER, bringing it closer to the convolutional approaches as well. 

HypER scoring function is defined as \cite{balavzevic2019hypernetwork}:
\begin{equation}
    \phi_h(e_s, r, e_o) = h(\text{vec}(\mathbf{e}_s * \mathbf{F}_r)\mathbf{W})\mathbf{e}_o \label{eq:hyper_score}
\end{equation}
where $\mathbf{F}_r = \text{vec}^{-1}(\mathbf{H}\mathbf{r}) \in \mathbb{R}^{n_f \times l_f}$, $\mathbf{H} \in \mathbb{R}^{n_f l_f \times d_r}$ (hypernetwork), $\mathbf{W} \in \mathbb{R}^{n_f l_m \times d_e}$, $\text{vec}(.)$ transforms $n \times m$ matrix to $nm$-sized vector, $\text{vec}^{-1}(.)$ does the reverse operation, $*$ is the convolution operator, $h(.)$ is ReLU non-linearity and $n_f$, $l_f$ and $l_m=d_e - l_f + 1$ are \textit{number of filters}, \textit{filter length} and \textit{output length} of convolution. The convolution between a filter and the subject entity embedding can be seen as a matrix multiplication, where the filter is converted to a Toeplitz matrix of size ${l_m \times d_e}$. With $n_f$ filters, we can realize a 3D tensor of size $n_f \times l_m \times d_e$. Since the filters are generated by the hypernetwork, we have $d_r$ such 3D tensors, resulting in a 4D tensor of size $n_f \times l_m \times d_e \times d_r$ \cite{balavzevic2019hypernetwork}. Without loss of generality, we can view this 4D tensor as a 3D tensor $\mathcal{F} \in  \mathbb{R}^{n_fl_m \times d_e \times d_r}$. Taking mode-$1$ product as $\mathcal{F} \times_1 \mathbf{W}^T$ returns a final tensor $\mathcal{G} \in \mathbb{R}^{d_e \times d_e \times d_r}$. Thus, HypER operations $\text{vec}(\mathbf{e}_s * \mathbf{F}_r)\mathbf{W}$ simplify to $\mathcal{G} \times_3 \mathbf{r} \times_2 \mathbf{e}_s$. At $k=d_e$, with $\mathbf{U} \in \mathbb{R}^{d_e \times d_e^2}$ as block identity matrices (same as in LowFER's relation to RESCAL) and $\mathbf{V} \in \mathbb{R}^{d_r \times de^2}$ set to $\mathbf{G}^T$ ($\mathcal{G}$ viewed as a matrix of size $d_e^2 \times d_r$ and transposed), LowFER's score in Eq. \ref{eq:final_scoring_func_lowfer} represents HypER, up to the non-linearity.


\section{Experiments and Results}

We conducted the experiments on four benchmark datasets: WN18 \cite{bordes2013translating}, WN18RR \cite{dettmers2018convolutional}, FB15k \cite{bordes2013translating} and FB15k-237 \cite{toutanova2015representing} (see Appendix \ref{appendix:B} for the details, including best hyperparameters and additional experiments).


\subsection{Link Prediction} \label{sec:link_prediction_results}

Table \ref{table:main_results} shows our main results, where LowFER-$1$, LowFER-$10$ and LowFER-$k$* represent our model for $k=1$, $k=10$ and $k=\text{best}$.  We choose LowFER-$1$ and LowFER-$10$ as baselines. Overall, LowFER reaches competitive performance on all the datasets with state-of-the-art results on FB15k and FB15k-237. On WN18 and WN18RR, TuckER is marginally better than LowFER.

LowFER performs well at low-ranks with significantly less number of parameters compared to other linear models (Table \ref{table:linear_model_parameters}). At $k=1$, it performs better than or on par with both non-linear and linear models (including ComplEx and SimplE) except HypER and TuckER. For FB15k-237, LowFER-$1$ ($\sim$3M parameters) outperforms R-GCN, RotatE, DistMult and ComplEx by an average of 5.9\% on MRR, and it additionally outperforms convolutional models (ConvE, HypER) at $k=10$ with only $+$0.8M parameters. On FB15k, the best reported TuckER model is improved upon, with absolute +$1.9$\% increase on toughest Hits@1 metric. This already achieves state-of-the-art with almost half the parameters, $\sim$5.5M in contrast to TuckER's $\sim$11.3M. On WN18RR and WN18, LowFER-$1$ outperforms all the models excluding TuckER and HypER. With LowFER-$k$*, we marginally reach state-of-the-art performance on WN18RR and FB15k-237. On FB15k, we reach new state-of-the-art for $\sim$9.51M parameters with +$2.9$\% and +$4.1$\% improvement on MRR and Hits@1. 

The empirical gains can be attributed to LowFER's ability to perform \textit{good} fusion between entities and relations while avoiding overfitting through low-rank matrices remaining parameter efficient, with strong performance even at extreme low-ranks. Further, like TuckER, it allows for parameter sharing through the $\mathbf{U}$ and $\mathbf{V}$ matrices, unlike ComplEx and SimplE which rely only on embedding matrices. 


\begin{table}[!tb]
    \centering
    \caption{Comparison between the number of parameters in millions (M) of strong linear models. For LowFER-$k$*, the $k$ values are $10$, $100$, $30$ and $50$ for WN18, FB15k-237, WN18RR and FB15k respectively.}\label{table:linear_model_parameters}
    \vspace{0.3cm}
    \resizebox{7cm}{!}{
    \begin{tabular}{lcccc}
    \toprule
    Model & WN18 & FB15k-237 & WN18RR & FB15k \\
    \midrule
    ComplEx & $16.4$ & $6.0$ & $16.4$ & $6.5$ \\
    SimplE  & $16.4$ & - & $16.4$ & $6.5$ \\
    TuckER  & $9.4$ & $11.0$ & $9.4$ & $11.3$ \\
    LowFER-1 & $8.2$ & $3.0$ & $8.2$ & $4.6$ \\
    LowFER-10 & $8.6$ & $3.8$ & $8.6$ & $5.5$ \\
    LowFER-$k$* & $8.6$ & $11.3$ & $9.6$ & $9.5$ \\ 
    \bottomrule
    \end{tabular}
    }
\end{table}


\begin{figure}[!tbp]
    \includegraphics[width=0.85\linewidth]{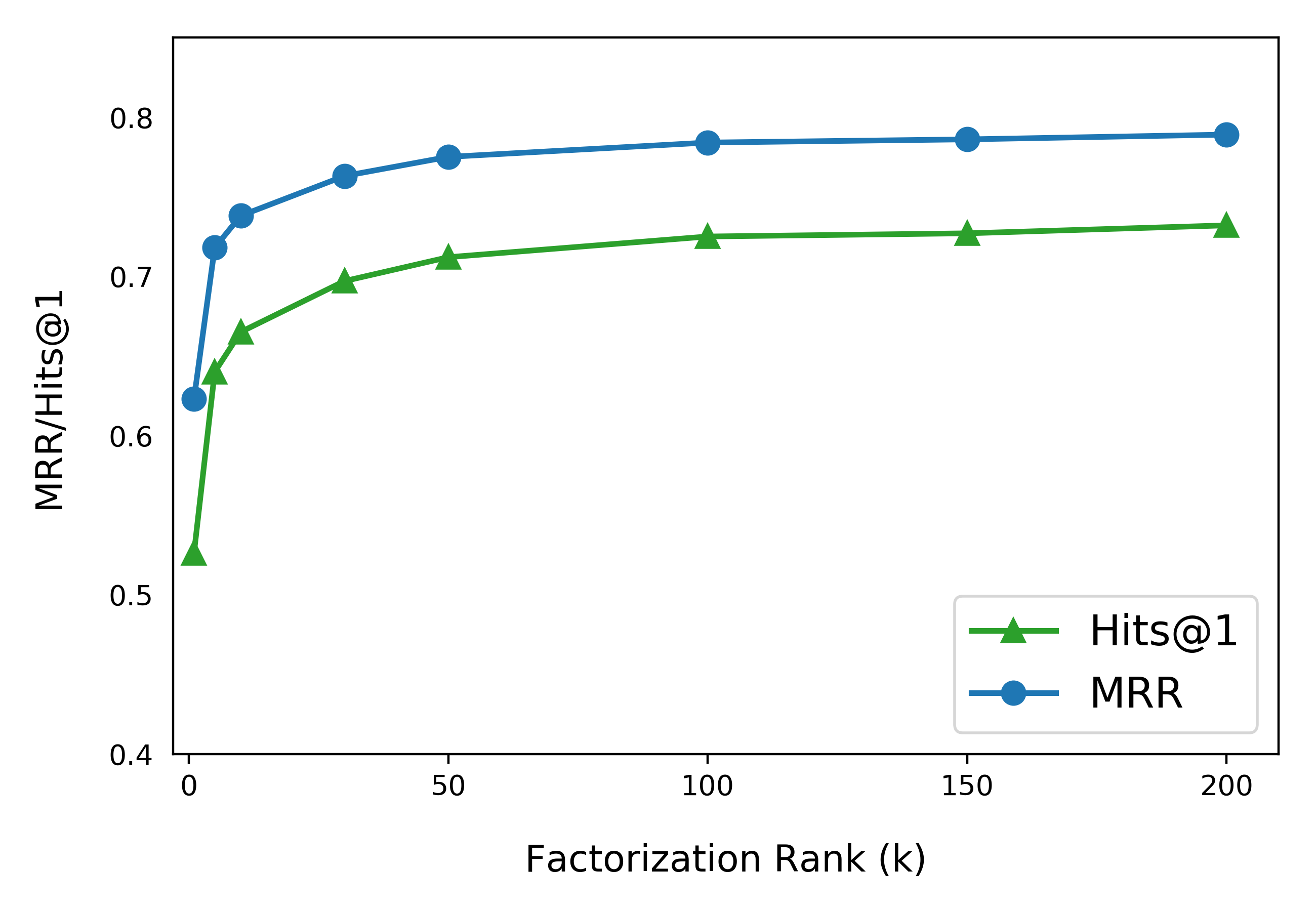}
    \centering
    \caption{Influence of increasing the factorization rank on MRR and Hits@1 scores for FB15k.} \label{fig:k_influence}
\end{figure}


\subsection{Effect of Factorization Rank}

From link prediction results, we observe that rank plays an important role depending on the entities-to-relations ratio in the dataset. For $d_e=200$ and $d_r=30$, we vary $k$ from $\{1, 5, 10, 30, 50, 100, 150, 200\}$ on FB15k and plot the MRR and Hits@1 scores (Figure \ref{fig:k_influence}). From $k=1$ to $k=5$, the MRR score increases from $0.62$ to $0.72$ and Hits@1 increases from $0.53$ to $0.64$. For higher ranks (after $50$), the change is minimal. Empirically, the effect of $k$ diminishes as the number of the entities per relation becomes larger, e.g., it is $\sim3722$ for WN18RR in contrast to $\sim11$ for FB15k. We suspect that this could be due to the fact that as $n_e \gg d_e$, most of the knowledge is learned through embedding matrices rather than the model parameters $\mathbf{U}$ and $\mathbf{V}$. To test this, we took a trained LowFER model, on WN18 dataset, and added zero mean Gaussian noise with variance in $\{1.0, 1.25, 1.5, 1.75, 2.0\}$ to $\mathbf{U}$ and $\mathbf{V}$ and evaluated on the test set. The MRR score changed from $0.95$ to $\{0.92, 0.84, 0.65, 0.42, 0.24\}$ for each level of noise. This shows that in cases as such, the embeddings have potential to capture more knowledge than the shared parameters.

Empirically, we found when $d_e=d_r$, taking $k=d_e/2$ performs nearly the same as TuckER \cite{balazevic2019tucker}. This can be observed in LowFER-$k$* for FB15k-237 ($d_e=d_r=200$, $k=100$), where our results are almost indistinguishable from TuckER's. This can be expected as the number of parameters in both models are almost the same ($\sim$11M). It should be noted that in practice when we train LowFER, we initialize with two i.i.d matrices, which are not shared, compared to TuckER's core tensor (Eq. \ref{eq:tucker_scoring_func}), allowing us to reach almost the same performance despite less parameter sharing.


\begin{figure}[!tbp]
    \includegraphics[width=0.85\linewidth]{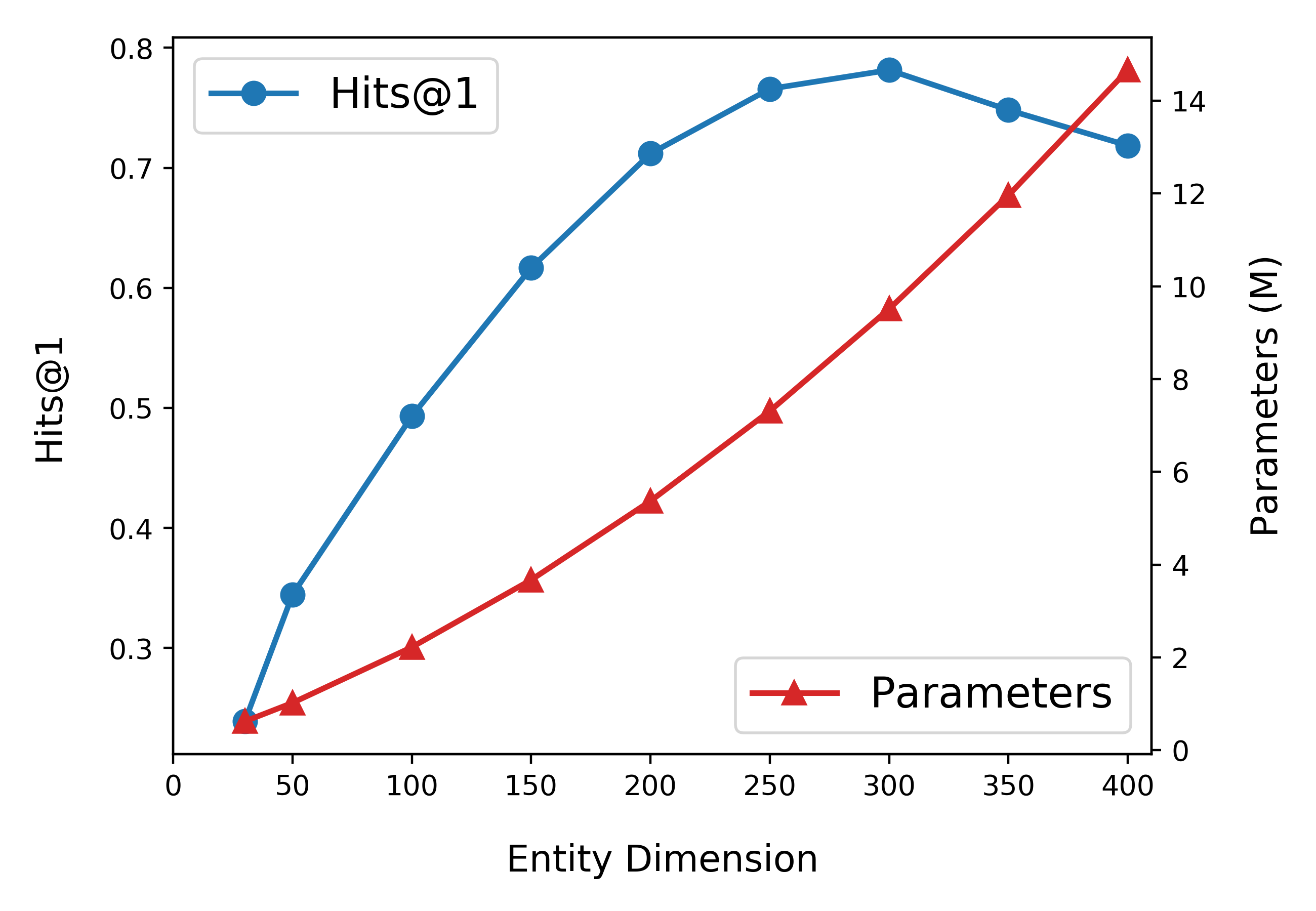}
    \centering
    \caption{Influence of changing the entity embedding dimension $d_e$ on Hits@1 metric and growth of parameters in million (M).}
    \label{fig:de_effect1}
\end{figure}


\begin{table}[!tbp]
    \centering
    \caption{Link prediction results on FB15k with $d_e=d_r=200$.}\label{tab:fb15k_200_200_exps}
    \vspace{0.3cm}
    \resizebox{7.5cm}{!}{
    \begin{tabular}{rrcccc}
    \toprule 
    $k$ & Params (M) & MRR & Hits@1 & Hits@3 & Hits@10\\
    \midrule
    $1$ & $3.60$ & $0.634$ & $0.538$ & $0.695$ & $0.803$ \\
    $5$ & $3.92$ & $0.720$ & $0.641$ & $0.776$ & $0.860$ \\
    $10$ & $4.33$ & $0.742$ & $0.667$ & $0.790$ & $0.871$ \\
    $30$ & $5.93$ & $0.774$ & $0.709$ & $0.817$ & $0.885$ \\
    $50$ & $7.53$ & $0.776$ & $0.713$ & $0.818$ & $0.886$ \\
    $100$ & $11.53$ & $0.779$ & $0.717$ & $0.821$ & $0.887$ \\
    \bottomrule
    \end{tabular}
    }
 \end{table}


\subsection{Effect of Embedding Dimension} \label{sec:embedding_exps}

The size of entity embedding dimension $d_e$ accounts for the significant number of parameters in LowFER, growing linearly with number of entities $n_e$. To study the effect, we trained our models on FB15k, with $d_r=30$, $k=50$ constant, and varying $de$ in $\{30, 50, 100, 150, 200, 250, 300, 350, 400\}$. As can be seen in Figure \ref{fig:de_effect1}, increasing the entity embedding dimension significantly increases the Hits@1 metric, for almost linear growth in number of parameters. However, it only improves till $300$ and starts overfitting afterwards.

In \citet{balazevic2019tucker}, authors reported $d_e=d_r=200$ as best choice of dimensions for TuckER on FB15k, however, we found using $de=300$ and $d_r=30$ better with lesser number of parameters for LowFER. For fair comparison, we also provide the results for $d_e=d_r=200$ for $k$ in $\{1, 5, 10, 30, 50, 100\}$ in Table \ref{tab:fb15k_200_200_exps}. As $k$ is increased, we see an improvement over all the metrics. At $k=100$, where we expected LowFER to match TuckER's performance (MRR=$0.795$, Hits@1=$0.741$, $\sim$11 million parameters), it was lower ($-1.6$\% on MRR and $-2.4$\% on Hits@1). In comparison, our model with $d_e=300$, $d_r=30$ and $k=10$ with $\sim$5.6 million parameters only, gives better results than this setting and TuckER. Therefore, at $d_e=d_r=200$, our model is most likely overfitting. 

As noted above that it could be that LowFER is overfitting therefore, we did coarse grid search over relation embedding dimension in $\{30, 50, 100, 150, 200\}$ and $k$ in $\{1, 5, 10, 30, 50, 100, 150, 200\}$ while keeping $d_e=200$ fixed. We found $d_r=50$ at $k=150$ reaches almost the same performance as TuckER with $\sim$10.6M parameters compared to TuckER's $\sim$11.3M parameters. We also experimented with $l_2$-regularization (Reg) and noted minor improvements, with regularization strength $0.0005$. Table \ref{tab:fb15k_200_50_150} summarizes these results. Note that all the experiments reported in main results (Table \ref{table:main_results}) were without any regularization. In general, we only noticed slight improvements in FB15k with $l_2$-regularization.


\begin{table}[!tbp]
    \centering
    \caption{Link prediction results on FB15k with $d_e=200$, $d_r=50$, $k=150$ and $l_2$-regularization $0.0005$.}\label{tab:fb15k_200_50_150}
    \vspace{0.3cm}
    \resizebox{7.5cm}{!}{
    \begin{tabular}{llcccc}
    \toprule 
    Model & Params (M) & MRR & Hits@1 & Hits@3 & Hits@10\\
    \midrule
    TuckER & $11.3$ & $0.795$ & $0.741$& $0.833$ & $0.892$ \\
    LowFER-$k$* & $10.6$ & $0.795$ & $0.739$ & $0.831$ & $0.891$ \\
    LowFER-$k$* $+$ Reg & $10.6$ & $0.802$ & $0.749$ & $0.837$ & $0.892$ \\
    \bottomrule
    \end{tabular}
    }
 \end{table}


\subsection{Analysis of Relation Results} \label{sec:per_rel_analysis}

Link prediction models that can discover relation types automatically without prior knowledge indicate \textit{better} generalization. As shown, and discussed in section \ref{sec:theory}, LowFER, among other models (Table \ref{table:bounds_summary}), can learn to capture all relation types without additional constraints. However, in practice, these bounds are loose and require very large dimensions, raising an inspection into their performance on different relation types. In \citet{kazemi2018simple}, it was identified that WN18 contains redundant relations, i.e., $\forall e_i, e_j \in \mathcal{E}: (e_i, r_1, e_j) \in \mathcal{T} \Leftrightarrow (e_j, r_2, e_i) \in \mathcal{T}$, such as $<$\textit{hyponym}, \textit{hypernym}$>$, $<$\textit{meronym}, \textit{holonym}$>$ etc. To alleviate this, \citet{dettmers2018convolutional} proposed WN18RR with such relations removed, since knowledge about one can help infer the knowledge about the other. Table \ref{table:rel_specific_results} shows the per relation results of LowFER and TuckER on WN18 and WN18RR. We see that performance drops for $7$ relations, with an average performance decrease of $-70.6$\% and $-69.3$\% for LowFER and TuckER respectively (with highest decrease on \textit{member\_of\_domain\_usage} for both). For symmetric relations (such as \textit{derivationally\_related\_form}), the performance is approximately the same where we observe severe limitation to model asymmetry. We believe this is because LowFER (also TuckER) is constraint-free and adding certain constraints based on background knowledge is \textit{necessary} to improve the model's accuracy. SimplE is the only \textit{fully expressive} model that has formally shown to address these limitations (cf. Proposition 3, 4 and 5 in \citet{kazemi2018simple}). Since LowFER subsumes SimplE therefore, such rules can be studied for extending LowFER to incorporate the background knowledge. 


\begin{table}[!t]
    \centering
    \caption{Relation specific test set results on WN18 and WN18RR with LowFER-$k$* and best reported TuckER model \cite{balazevic2019tucker}.} \label{table:rel_specific_results}
    \vspace{0.3cm}
    \resizebox{8cm}{!}{
    \begin{tabular}{lccccc}
    \toprule 
    & \multicolumn{2}{c}{WN18}&&\multicolumn{2}{c}{WN18RR} \\ 
    \cmidrule{2-3} \cmidrule{5-6}
    & LowFER & TuckER & & LowFER & TuckER \\
    \midrule
    also\_see & $0.638$ & $0.630$ & & $0.627$ & $0.614$ \\
    derivationally\_related\_form & $0.954$ & $0.956$ & & $0.957$ & $0.957$ \\
    has\_part & $0.944$ & $0.945$ & & $0.138$ & $0.129$ \\
    hypernym & $0.961$ & $0.962$ & & $0.189$ & $0.189$ \\
    instance\_hypernym & $0.986$ & $0.982$ & & $0.576$ & $0.591$ \\ 
    member\_meronym & $0.930$ & $0.927$ & & $0.155$ & $0.131$ \\
    member\_of\_domain\_region & $0.885$ & $0.885$ & & $0.060$ & $0.083$ \\
    member\_of\_domain\_usage & $0.917$ & $0.917$ & & $0.025$ & $0.096$ \\
    similar\_to & $1.0$ & $1.0$ & & $1.0$ & $1.0$ \\
    synset\_domain\_topic\_of & $0.956$ & $0.952$ & & $0.494$ & $0.499$ \\
    verb\_group & $0.974$ & $0.974$ & & $0.974$ & $0.974$ \\
    \bottomrule
    \end{tabular}
    }
 \end{table}


\section{Conclusion}

This work proposes a simple and parameter efficient \textit{fully expressive} linear model that is theoretically well sound and performs on par or state-of-the-art in practice. We showed that LowFER generalizes to other linear models in KGC, providing a unified theoretical view. It offers a strong baseline to the deep learning based models and raises further interest into the study of linear models. We also highlighted some limitations with respect to gains on harder relations, which still pose a challenge. We conclude that the constraint-free and parameter efficient linear models, which allow for parameter sharing, are better from a modeling perspective, but are still similarly limited in learning difficult relations. Therefore, studying the trade-off between parameters sharing and constraints becomes an important future work.


\section*{Acknowledgements}

The authors would like to thank the anonymous reviewers for helpful feedback and gratefully acknowledge the use of code released by \citet{balazevic2019tucker}. The work was partially funded by the European Union's Horizon 2020 research and innovation programme under grant agreement No. 777107 through the project Precise4Q and by the German Federal Ministry of Education and Research (BMBF) through the project DEEPLEE (01IW17001).

\nocite{kingma2014adam,ioffe2015batch,szegedy2016rethinking,pereyra2017regularizing,hayashi2017equivalence,mahdisoltani2013yago3,miettinen2011boolean,de2000multilinear}

\bibliography{references}
\bibliographystyle{icml2020}

\clearpage

\setcounter{table}{0}
\renewcommand{\thetable}{A.\arabic{table}}

\setcounter{figure}{0}
\renewcommand{\thefigure}{A.\arabic{figure}}

\setcounter{equation}{0}
\renewcommand{\theequation}{A.\arabic{equation}}

\setcounter{proposition}{2}
\renewcommand{\theproposition}{\arabic{proposition}}

\icmltitlerunning{LowFER: Low-rank Bilinear Pooling for Link Prediction --- Appendix}

\twocolumn[
\icmltitle{LowFER: Low-rank Bilinear Pooling for Link Prediction --- Appendix}
]

\appendix


\section{Proofs}


\subsection{Proposition 1} \label{appendix:A.1}

\begin{proof}
First, we will prove the case for $k=d_e$, with the proof for the case $k=d_r$ following a similar argument. For both cases, we represent entity embedding vector as $\mathbf{e}_i \in \{0, 1\}^{|\mathcal{E}|}$, such that only $i$-th element is $1$, and similarly, relation embedding vector as $\mathbf{r}_j \in \{0, 1\}^{|\mathcal{R}|}$, such that only $j$-th element is $1$. We represent with $\mathbf{U} \in \mathbb{R}^{d_e \times k d_e}$ and $\mathbf{V} \in \mathbb{R}^{d_r \times k d_e}$ the model parameters, then, given any triple $(e_i, r_j, e_l) \in \mathcal{T}$ with indices $(i, j, l)$, such that $1 \leq i, l \leq |\mathcal{E}|$ and $1 \leq j \leq |\mathcal{R}|$:

For $k=d_e$: We let $\mathbf{U}_{mn} = 1$ for $n = m + (o-1)d_e$, for all $m$ in $\{1, ..., d_e\}$ and for all $o$ in $\{1, ..., k\}$ and $0$ otherwise. Further, let $\mathbf{V}_{pq} = 1$ for $p = j$ and $q = (l-1)d_e + i$ and $0$ otherwise. Applying $\mathbf{g}(e_i, r_j)$ and taking dot product of the resultant vector with $\mathbf{e}_l$ (Eq. \ref{eq:final_scoring_func_lowfer}) perfectly represents the ground truth as $1$. Also, for any triple in $\mathcal{T'}$, a score of $0$ is assigned. 

For $k=d_r$: We let $\mathbf{U}_{mn} = 1$ for $m = i$ and $n = (l-1)d_e + j$ and $0$ otherwise. Further, let $\mathbf{V}_{pq} = 1$ for $q = p + (o-1)d_e$, for all $p$ in $\{1, ..., d_r\}$ and for all $o$ in $\{1, .., k\}$ and $0$ otherwise. Rest of the argument follows the same as for $k=d_e$.
\end{proof}


\subsection{Proposition 2} \label{appendix:A.2}

\begin{proof}
From Eq. \ref{eq:tucker_core_appx} and \ref{eq:tucker_approx}, observe that the $m$-th slice of the core tensor $\mathcal{W}$ on object dimension is approximated by adding $k$ rank-1 matrices, each of which is a cross product between $m$-th column in $\mathbf{W}^{(l)}_{U}$ and $m$-th column in $\mathbf{W}^{(l)}_{V}$, for all $l$ in $\{1, ..., d_e\}$. Each slice of the core tensor $\mathcal{W}$ on object dimension has a maximum rank $\min(d_e, d_r)$ and from Singular Value Decomposition (SVD), there exists $n$ ($\le \min(d_e, d_r)$) scaled left singular and scaled right singular vectors whose sum of the cross products is equal to the slice. By choosing these scaled left singular vectors, scaled right singular vectors and zero vectors (in case the rank of the corresponding slice is less than the maximum rank of any such slice) as columns for matrices $\mathbf{W}^{(l)}_{U}$, $\mathbf{W}^{(l)}_{V}$, for all $l$ in $\{1, ..., d_e\}$,  the core tensor $\mathcal{W}$ is obtained from Eq. \ref{eq:tucker_core_appx} with $k \leq \min(d_e, d_r)$.
\end{proof}

Please note that the bounds presented in Table \ref{table:bounds_summary} are weak and in general, not very useful. They are derived only for checking the full expressibility of a model, which is also referred to as model being \textit{universal} in \citet{wang2018multi}, to handle \textit{all-types} of relations with zero error, i.e., perfect reconstruction of the binary tensor $\mathbf{T}$ for a given $\mathcal{KG}$. Since factorization based methods can be seen as approximating the true binary tensor, more useful bounds can be derived by studying the quality of the approximations for a given accuracy level. The bounds for RESCAL, ComplEx and HolE are reported from \citet{wang2018multi} while for SimplE \cite{kazemi2018simple} and TuckER \cite{balazevic2019tucker}, from their respective papers. 

As discussed in section \ref{sec:fully_expressive}, it was first shown in \citet{wang2018multi} that TransE is not universal, which was later generalized to other translational models by \citet{kazemi2018simple}. RotatE \cite{sun2019rotate}, a state-of-the-art dissimilarity based model, alleviates the issues of TransE by learning counterclockwise rotations in the complex space. For a triple $(h, r, t)$, RotatE models the tail entity as $\mathbf{t} = \mathbf{h} \circ \mathbf{r}$, where $\mathbf{h}, \mathbf{t} \in \mathbb{C}^d$ are head and tail embeddings and $\mathbf{r} \in \mathbb{C}^d$ is the relation embedding with a restriction on the element-wise modulus, $|r_i|=1$. Therefore, it only affects the phases of the entity embeddings in the complex vector space. \citet{sun2019rotate} showed that it can learn \textit{symmetric}, \textit{assymmetric}, \textit{inverse} and \textit{composition} relations (cf. Lemma 1, 2, 3) and degenerates to TransE (cf. Theorem 4). However, we note that RotatE is also not \textit{fully expressive} due to its inability to model the transitive relations in the general case, i.e., irrespective of the size of embedding dimension.

\begin{proposition} \label{proposition3}
RotatE is not fully expressive due to a limitation on the transitive relations.
\end{proposition}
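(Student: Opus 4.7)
The plan is to exhibit a small ground truth involving a transitive relation that RotatE provably cannot fit, regardless of the embedding dimension. First I would recall the RotatE setup: entities are embedded as $\mathbf{e}\in\mathbb{C}^{d}$, each relation as $\mathbf{r}\in\mathbb{C}^{d}$ with $|r_i|=1$ for every coordinate, and the scoring function is $-\|\mathbf{h}\circ\mathbf{r}-\mathbf{t}\|$. To be fully expressive, one must be able to choose embeddings and a score threshold so that all correct triples score strictly above all incorrect ones; in particular, for the optimum to separate the two sets, correct triples must satisfy (or arbitrarily closely approximate) $\mathbf{t}=\mathbf{h}\circ\mathbf{r}$, while incorrect triples must not.

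Next I would construct the counterexample. Take $\mathcal{E}=\{e_1,e_2,e_3\}$ and one relation $r$, with ground truth
\[
  \mathcal{T}=\{(e_1,r,e_2),\,(e_2,r,e_3),\,(e_1,r,e_3)\}.
\]
This is the closure of the chain $e_1\to e_2\to e_3$ under transitivity, and it contains no self-loops, so $(e_i,r,e_i)\notin\mathcal{T}$ for $i=1,2,3$. The key step is then purely algebraic: if $(e_1,r,e_2)$ and $(e_1,r,e_3)$ are both to be represented exactly, we have $\mathbf{e}_2=\mathbf{e}_1\circ\mathbf{r}=\mathbf{e}_3$, and combining this with $\mathbf{e}_3=\mathbf{e}_2\circ\mathbf{r}$ forces $\mathbf{r}\circ\mathbf{e}_2=\mathbf{e}_2$ coordinatewise. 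Because $|r_i|=1$, the only way to have $r_i e_{2,i}=e_{2,i}$ either is $r_i=1$ or $e_{2,i}=0$; in the second case the corresponding coordinate plays no role in any score, so after projecting away zero coordinates we may assume $\mathbf{r}=\mathbf{1}$. But then $\mathbf{e}_1\circ\mathbf{r}=\mathbf{e}_1$, so in fact $\mathbf{e}_1=\mathbf{e}_2=\mathbf{e}_3$.

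The contradiction now follows immediately: the score of the incorrect triple $(e_1,r,e_1)$ equals that of the correct triple $(e_1,r,e_2)$, so no threshold can separate $\mathcal{T}$ from $\mathcal{T}'$. To upgrade this from the ``exact fit'' setting to the approximate/score-based one, I would note that any margin-based separation requires $\|\mathbf{e}_1\circ\mathbf{r}-\mathbf{e}_2\|,\|\mathbf{e}_2\circ\mathbf{r}-\mathbf{e}_3\|,\|\mathbf{e}_1\circ\mathbf{r}-\mathbf{e}_3\|$ all to be strictly smaller than $\|\mathbf{e}_1\circ\mathbf{r}-\mathbf{e}_1\|$; a short triangle-inequality argument using $|r_i|=1$ shows the former three control the latter, making the required strict separation impossible.

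The main obstacle is the last paragraph: handling the fact that ``correctness'' in RotatE is defined by a distance score rather than an exact equation, which means one has to rule out approximate solutions as well. A clean way to handle this is to observe that the modulus constraint $|r_i|=1$ makes the map $\mathbf{x}\mapsto\mathbf{x}\circ\mathbf{r}$ an isometry of $\mathbb{C}^d$, so every triangle-inequality bound becomes tight and the chain of equalities in the previous paragraph carries over to a chain of inequalities that pin $\|\mathbf{e}_1\circ\mathbf{r}-\mathbf{e}_1\|$ to zero in the limit of perfect separation. The coordinate-wise degeneracy $e_{2,i}=0$ is the only other subtlety, and it is handled by discarding those coordinates without loss of generality, as they contribute nothing to any RotatE score involving $\mathbf{e}_2$.
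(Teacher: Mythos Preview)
Your exact-equality argument is essentially the paper's proof: same three-entity transitive counterexample, same conclusion that the constraints force $\mathbf{r}=\mathbf{1}$ and hence $\mathbf{e}_1=\mathbf{e}_2=\mathbf{e}_3$. The algebra is ordered slightly differently (you first get $\mathbf{e}_2=\mathbf{e}_3$ from the two triples with head $e_1$, whereas the paper first composes the chain to obtain $\mathbf{r}\circ\mathbf{r}=\mathbf{r}$), and you are a bit more careful about the degenerate coordinates $e_{2,i}=0$; but the substance is identical.

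Where you go beyond the paper is the final paragraph on the score-based (approximate) setting, and that step does not go through. The paper's proof, like yours, treats ``$(h,r,t)$ holds'' as the exact equation $\mathbf{t}=\mathbf{h}\circ\mathbf{r}$ and stops there. You correctly flag that full expressivity is really about separating scores, but the promised triangle-inequality/isometry argument cannot deliver the contradiction: the isometry only yields a bound of the form $\|\mathbf{e}_1\circ\mathbf{r}-\mathbf{e}_1\|\le C\cdot\max(a,b,c)$ with $C>1$, which does not force an incorrect triple to score at least as well as a correct one. Concretely, in dimension $d=1$ take $r=e^{i\theta}$, $\mathbf{e}_1=1$, $\mathbf{e}_2=e^{i\theta/2}$, $\mathbf{e}_3=e^{i\theta}$. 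The three correct distances are $2\sin(\theta/4)$, $2\sin(\theta/4)$, $0$, while every one of the six incorrect triples has distance at least $2\sin(\theta/2)$; for small $\theta$ this gives a strict margin, so RotatE \emph{does} separate this particular $\mathcal{T}$ under the distance score. Thus your counterexample, taken at face value for the score-based definition, is actually representable, and the ``short triangle-inequality argument'' you allude to cannot exist. If you want to keep the score-based claim you would need a different ground truth (or a different incorrect triple to pin down); as written, your argument is sound exactly to the extent the paper's is, namely under the exact-equation reading of RotatE.
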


\begin{proof}
Consider $\{e_1, e_2, e_3\} =\Delta \subset \mathcal{E}$ and $r \in \mathcal{R}$ be a transitive relation on $\Delta$ such that $r(e_1, e_2), r(e_2, e_3)$ and $r(e_1, e_3)$ belong to the ground truth. Let $\mathbf{e}_1, \mathbf{e}_2, \mathbf{e}_3, \mathbf{r} \in \mathbb{C}^d$ be the embedding vectors for RotatE. Let us assume that $r(e_1, e_2)$ and $r(e_2, e_3)$ hold with RotatE, then we get $\mathbf{e}_2 = \mathbf{r} \circ \mathbf{e}_1$ and $\mathbf{e}_3 = \mathbf{r} \circ \mathbf{e}_2$. From definition of transitive relation we know that $r(e_1, e_2) \land r(e_2, e_3) \implies r(e_1, e_3)$, here we obtain $\mathbf{e}_3 = \mathbf{r} \circ \mathbf{r} \circ \mathbf{e}_1$. Therefore for $r(e_1, e_3)$ to hold with RotatE, we must have $\mathbf{r} \circ \mathbf{r} = \mathbf{r} \implies \mathbf{r} = \mathbf{1}$, which in turn suggest $\mathbf{e}_1 = \mathbf{e}_2 = \mathbf{e}_3$ but $e_1, e_2, e_3$ are distinct entities. More concretely, this condition requires that for all elements of relation embedding $r_i$, $\text{cos}(\theta_{r,i}) + i\text{sin}(\theta_{r,i})$ should match $\text{cos}(2\theta_{r,i}) + i\text{sin}(2\theta_{r,i})$, which is only possible when $\theta_{r, i} \in \{0, 2\pi\}$, effectively no rotation.
\end{proof}

\begin{figure}[!t]
    \includegraphics[width=0.85\linewidth]{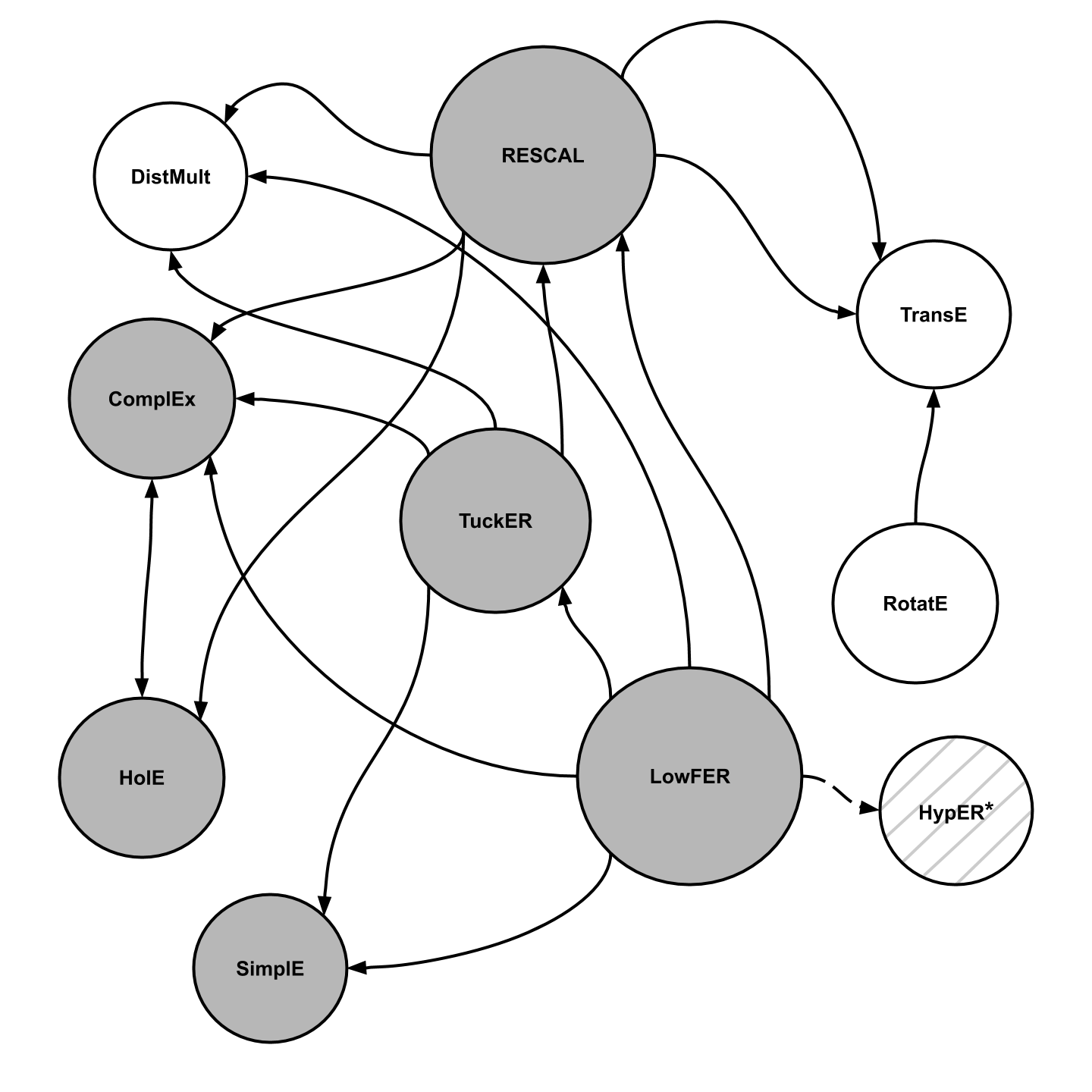}
    \centering
    \caption{\textit{Subsumption map of KGC models for known relationships}: Each node represents a model, where a directed edge shows that the parent node has shown to subsume the child under some conditions. The dotted line shows that the relation is not general enough, where the grey nodes represent \textit{fully expressive} models, the white nodes represent the models that have shown to be not \textit{fully expressive} and dashed ones where this property is not known. The size of a node is relative to the number of outgoing edges. * HypER \cite{balavzevic2019hypernetwork} has shown to be related to factorization based methods up to a non-linearity, but the authors did not specify any explicit modeling subsumption of other models.} \label{fig:scoring_subsumptions}
\end{figure}


\subsection{KGC Scoring Subsumption} \label{sec:subsumptions}

In sections \ref{sec:relation_with_tucker}, \ref{sec:relations_with_bilinear_models} and \ref{sec:relation_with_hyper} we presented LowFER's relations to other models. In this section, we breifly summarize the subsumption findings of related works. Please note that we only discuss the published findings and refrain from any implied results.

First, \citet{hayashi2017equivalence} showed the equivalence between ComplEx and HolE up to a constant factor using Parseval's theorem\footnote{For $\mathbf{x}, \mathbf{y} \in \mathbb{R}^d$, it states that $\mathbf{x}^T\mathbf{y}=\frac{1}{d} \mathcal{F}(\mathbf{x})^T \overline{\mathcal{F}(\mathbf{y})}$, where $\mathcal{F}: \mathbb{R}^d \rightarrow \mathbb{C}^d$ is the discrete Fourier transform (DFT).}, which was also discussed in \citet{trouillon2017complex}. Then, the key contributions came from the work of \citet{wang2018multi}, who showed that RESCAL subsumes TransE, ComplEx, HolE and DistMult by the arguments of ranking tensor. \citet{kazemi2018simple} presented a unified understanding of RESCAL, DistMult, ComplEx and SimplE as \textit{family of bilinear models} under different constraints on the bilinear map. In contrast to the black box 2D-convolution based ConvE model, HypER \cite{balavzevic2019hypernetwork} showed that 1D-convolution with \textit{hypernetworks} \cite{ha2016hypernetworks} come close to well established factorization based methods up to a non-linearity. \citet{balazevic2019tucker} showed that with certain constraints on the core tensor of the Tucker decomposition \cite{tucker1966some}, it can subsume the \textit{family of bilinear models}. In this work, we showed that LowFER subsumes TuckER and can be seen as providing low-rank approximation of the core tensor\footnote{The rank of a tensor is the minimal number of rank-$1$ tensors that yield it in a linear combination. It is known that the tensor rank is NP-hard to compute, and for a 3rd-order tensor $n \times m \times k$, it can be more than $\min(n, m, k)$ but no more than $\min(nm, nk, mk)$ \cite{miettinen2011boolean}. Whereas, the $n$-rank of a tensor $\mathcal{W}$ is the dimension of the vector space spanned by the $n$-mode vectors, which are the columns of the matrix unfolding $\mathbf{W}_{(n)}$ \cite{de2000multilinear}.} with accurate representation under certain conditions (Proposition \ref{proposition2}). We also showed that LowFER can subsume the \textit{family of bilinear models} and HypER up to a non-linearity. Figure \ref{fig:scoring_subsumptions}\footnote{\url{https://bit.ly/3k641Ba}} provides a network style map for the models discussed here.


\section{Experiments} \label{appendix:B}

In this section, we will present the details of the datasets, evaluation metrics, model implementation, the choice of hyperparameters and report additional experiments.


\subsection{Data}

We conducted the experiments on four benchmark datasets: WN18 \cite{bordes2013translating} - a subset of Wordnet, WN18RR \cite{dettmers2018convolutional} - a subset of WN18 created through the removal of inverse relations from validation and test sets, FB15k \cite{bordes2013translating} - a subset of Freebase, and FB15k-237 \cite{toutanova2015representing} - a subset of FB15k created through the removal of inverse relations from validation and test sets. Table \ref{tab:datasets} shows the statistics of all the datasets.


\begin{table}[!t]
    \centering
    \caption{Datasets used for link prediction experiments, where $n_e$=number of entities, $n_r$=number of relations and the entities-to-relations ratio $n_e/n_r$ is approximated to the nearest integer.}\label{tab:datasets}
    \vspace{0.3cm}
    \resizebox{8cm}{!}{
      \begin{tabular}{lrrrrrr}
        \toprule 
        Dataset & $n_e$ & $n_r$ & $n_e/n_r$ & Training & Validation & Testing \\
        \midrule 
        WN18 & $40,943$ & $18$ & $2275$ & $141,442$ & $5,000$ & $5,000$ \\
        WN18RR & $40,943$ & $11$ & $3722$ & $86,835$ & $3,034$ & $3,134$ \\
        FB15k & $14,951$ & $1,345$ & $11$ & $483,142$ & $50,000$ & $59,071$ \\
        FB15k-237 & $14,541$ & $237$ & $61$ & $272,115$ & $17,535$ & $20,466$ \\
        \bottomrule
      \end{tabular}
    }
\end{table}


\subsection{Evaluation Metrics}

We report the standard metrics of Mean Reciprocal Rank (MRR) and Hits@$k$ for $k \in \{1, 3, 10\}$. For each test triple $(e_s, r, e_o)$, we score all the triples $(e_s, r, e)$ for all $e \in \mathcal{E}$. We then compute the inverse rank of true triple and average them over all examples. However, \citet{bordes2013translating} identified an issue with this evaluation and introduced \textit{filtered} MRR, where we only consider triples of the form $\{(e_s, r, e) \; | \; \forall e \in \mathcal{E} \ \text{s.t.} \; (e_s, r, e) \not\in \text{train} \cup \text{valid} \cup \text{test}\}$ during evaluation. We therefore reported \textit{filtered} MRR for all the experiments. The Hits@$k$ metric computes the percentage of test triples whose ranking is less than or equal to $k$.


\begin{table}[!t]
    \centering
    \caption{Best performing hyper-parameter values for LowFER, where lr=learning rate, dr=decay rate, $d_e$=entity embedding dimension, $d_r$=relation embedding dimension, $k$=LowFER factorization rank, dE=entity embedding dropout, dMFB=MFB dropout, dOut=output dropout and ls=label smoothing. Please note that dE, dMFB and dOut are the same as d\#$1$, d\#$2$ and d\#$3$ as in TuckER (see Appendix A in \citet{balazevic2019tucker}) respectively.}\label{table:hyperparams}
    \vspace{0.3cm}
    \resizebox{8cm}{!}{
    \begin{tabular}{lccccccccc}
      \toprule
      Dataset & lr & dr & $d_e$ & $d_r$ & $k$ & dE  & dMFB & dOut & ls\\
      \midrule
      WN18 & $0.005$ & $0.995$ & $200$ & $30$ & $10$ & $0.2$ & $0.1$ & $0.2$ & $0.1$ \\
      WN18RR & $0.01$ & $1.0$ & $200$ & $30$ & $30$ & $0.2$ & $0.2$ & $0.3$ & $0.1$ \\
      FB15k & $0.003$ & $0.99$ & $300$ & $30$ & $50$ & $0.2$ & $0.2$ & $0.3$ & $0.0$ \\
      FB15k-237 & $0.0005$ & $1.0$ & $200$ & $200$ & $100$ & $0.3$ & $0.4$ & $0.5$ & $0.1$ \\
      \bottomrule
    \end{tabular}
    }
\end{table}


\subsection{Implementation and Hyperparameters}

We implemented LowFER\footnote{\url{https://github.com/suamin/LowFER}} using the open-source code released by TuckER \cite{balazevic2019tucker}\footnote{\url{https://github.com/ibalazevic/TuckER}}. We did random search over the embedding dimensions in $\{30, 50, 100, 200, 300\}$ for $d_e$ and $d_r$. Further, we varied the factorization rank $k$ in $\{1, 5, 10, 30, 50, 100, 150, 200\}$, with $k=1$ (LowFER-$1$) and $k=10$ (LowFER-$10$) as baselines. For WN18RR and WN18, we found best $d_e=200$ and $d_r=30$ with $k$ value of $30$ and $10$ respectively. For FB15k-237, we found best $d_e=d_r=200$ at $k=100$. All of these embedding dimensions match the best reported in TuckER \cite{balazevic2019tucker}. However, for FB15k, we found using the configuration of $d_e=300$ and $d_r=30$ to be consistently better than $d_e=d_r=200$. For fair comparison, we also reported the results for $d_e=d_r=200$ and the best configuration when $d_e=200$ and $(d_r,k)\leq200$ (Table \ref{tab:fb15k_200_50_150}).

Similar to \citet{balazevic2019tucker}, we used Batch Normalization \cite{ioffe2015batch} but additionally power normalization and $l_2$-normalization to stabilize training from large outputs following the Hadamard product in main scoring function \cite{yu2017multi}\footnote{We observed no performance degradation by removing these additional normalization techniques but we used it in all the experiments to be consistent with prior work of \citet{yu2017multi}.}. We tested the best reported hyperparmeters of \citet{balazevic2019tucker} with random search and observed good performance in initial testing. With $d_e$, $d_r$ and $k$ selected, we used fixed set of values for rest of the hyperparameters reported in \citet{balazevic2019tucker}, including learning rate, decay rate, entity embedding dropout, MFB dropout, output dropout and label smoothing \cite{szegedy2016rethinking,pereyra2017regularizing} (see Table \ref{table:hyperparams} for the best hyperparameters). We used Adam \cite{kingma2014adam} for optimization. In all the experiments, we trained the models for $500$ epochs with batch size $128$ and reported the final results on test set.


\subsection{Results on YAGO3-10}

We report additional results on YAGO3-10, which is a subset of YAGO3 \cite{mahdisoltani2013yago3}, consisting of $123,182$ entities and $37$ relations such that have each entity has at least $10$ relations. We used the same best hyperparameters as for WN18RR. Table \ref{tab:yago10_results} shows that our model outperforms state-of-the-art models including RotatE and HypER. It is worth noting that LowFER-$k$* on YAGO3-10 has only $\sim$26M parameters compared to $\sim$61M parameters of RotatE \cite{sun2019rotate}, which also includes their self-adversarial negative sampling.
 

\begin{table}[!t]
    \centering
    \caption{Link prediction results on YAGO3-10. Results for DistMult, ComplEx and ConvE are taken from \citet{dettmers2018convolutional} and for RotatE \cite{sun2019rotate} (with self-adversarial negative sampling) and HypER \cite{balavzevic2019hypernetwork} are taken from respective papers.}\label{tab:yago10_results}
    \vspace{0.3cm}
    \resizebox{7.5cm}{!}{
    \begin{tabular}{lcccc}
    \toprule 
    Model & MRR & Hits@1 & Hits@3 & Hits@10\\
    \midrule
    DistMult & $0.340$ & $0.240$ & $0.380$ & $0.540$ \\
    ComplEx & $0.360$ & $0.260$ & $0.400$ & $0.550$ \\
    ConvE & $0.440$ & $0.350$ & $0.490$ & $0.620$ \\
    RotatE & $0.495$ & $0.402$ & $0.550$ & $0.670$ \\
    HypER & $\underline{0.533}$ & $\underline{0.455}$ & $\underline{0.580}$ & $\underline{0.678}$ \\
    LowFER-$k$* & $\mathbf{0.537}$ & $\mathbf{0.457}$ & $\mathbf{0.583}$ & $\mathbf{0.688}$ \\
    \bottomrule
    \end{tabular}
    }
 \end{table}


\begin{table}[!t]
    \centering
    \caption{Link prediction results with LowFER-$k$* and additional \texttt{tanh} non-linearity. The $\downarrow$ shows that the performance went down compared to the linear counterparts reported in Table \ref{table:main_results}.}\label{tab:non_linear_lowfer_results}
    \vspace{0.3cm}
    \resizebox{7.5cm}{!}{
    \begin{tabular}{lcccc}
    \toprule 
    Dataset & MRR & Hits@1 & Hits@3 & Hits@10\\
    \midrule
    FB15k-237 $\downarrow$ & $0.345$ & $0.256$ & $0.378$ & $0.526$ \\
    FB15k $\downarrow$ & $0.818$ & $0.771$ & $0.850$ & $0.898$ \\
    WN18RR $\downarrow$ & $0.457$ & $0.429$ & $0.469$ & $0.511$ \\
    WN18 & $0.950$ & $0.946$ & $0.952$ & $0.957$ \\
    \bottomrule
    \end{tabular}
    }
 \end{table}
 

\subsection{LowFER with Non-linearity}

Similar to \citet{kim2016hadamard}, we perform a simple ablation study by adding non-linearity to the LowFER scoring function as follows:
\begin{equation*}
    \bar{f}(e_s, r, e_o) = (\sigma(\mathbf{S}^k\text{diag}(\mathbf{U}^T \mathbf{e}_s)\mathbf{V}^T \mathbf{r}))^{T}\mathbf{e}_{o}
\end{equation*}
\noindent where we use hyperbolic tangent $\sigma=$ \texttt{tanh} non-linearity. Applying non-linear activation function can be seen as increasing the representation capacity of the model but Table \ref{tab:non_linear_lowfer_results} shows that the general performance of LowFER goes down.


\subsection{Models Comparison}

We compared LowFER with non-linear models including ConvE \cite{dettmers2018convolutional}, R-GCN \cite{schlichtkrull2018modeling}, Neural LP \cite{yang2017differentiable}, RotatE \cite{sun2019rotate}\footnote{Where we reported their results in Table \ref{table:main_results} without the self-adversarial negative sampling. For fair comparison, see Appendix H in their paper.}, TransE \cite{bordes2013translating}, TorusE \cite{ebisu2018toruse} and HypER \cite{balavzevic2019hypernetwork}. In linear models, we compared against DistMult \cite{yang2014embedding}, HolE \cite{nickel2016holographic}, ComplEx \cite{trouillon2016complex}, ANALOGY \cite{liu2017analogical}, SimplE \cite{kazemi2018simple} and state-of-the-art TuckER \cite{balazevic2019tucker} model. Results for the Canonical Tensor Decomposition \cite{lacroix2018canonical} were not included due to the uncommon choice of extremely large embedding dimensions of $d_e=d_r=2000$.

Additional models that were not reported in the main results (Table \ref{table:main_results}) due to partial results but were still outperformed by LowFER include M-Walk \cite{shen2018m} with their reported metrics of MRR=$0.437$, Hits@1=$0.414$ and Hits@3=$0.445$ on WN18RR and MINERVA \cite{das2017go} with Hits@10=$0.456$ on FB15k-237. The results in Table \ref{table:main_results} for all the models were taken from \citet{balavzevic2019hypernetwork} and \citet{balazevic2019tucker}. Lastly, in the section \ref{sec:per_rel_analysis}, to perform per relations comparisons, we trained the TuckER models with the best reported configurations in \citet{balazevic2019tucker} for WN18 and WN18RR.

\end{document}